\documentclass{article}
\usepackage{iclr2027_conference,times}

\usepackage{amsmath,amssymb,amsthm}
\usepackage{graphicx}
\usepackage{float}
\usepackage{wrapfig}
\usepackage{pifont}
\usepackage{makecell}
\usepackage{algorithm}
\usepackage{algpseudocode}
\usepackage{multirow}
\usepackage{booktabs}
\usepackage{xcolor}
\usepackage{hyperref}
\usepackage{url}
\usepackage{titletoc}

\newtheorem{proposition}{Proposition}

\title{Flow Matching Reinforcement for 3D Mesh Generation via Dynamic Homing Optimization}

\author{%
\parbox[t]{\dimexpr\textwidth-2\tabcolsep\relax}{%
\centering\normalfont
Zhen Zhou\textsuperscript{1,}\thanks{Equal contribution. \textsuperscript{$\dagger$}Corresponding authors. Email: wenw27958@gmail.com.}, Zhiwei Ning\textsuperscript{1,2,*}, Puhua Jiang\textsuperscript{1}, Sheng Zhang\textsuperscript{1}, Yifei Tang\textsuperscript{3},\\
Jie Yang\textsuperscript{2}, Xintong Han\textsuperscript{1}, Wei Liu\textsuperscript{2,$\dagger$}, Chunchao Guo\textsuperscript{1,$\dagger$}\\[0.4em]
\textsuperscript{1}Tencent Hunyuan, \textsuperscript{2}Shanghai Jiao Tong University\\
\textsuperscript{3}Communication University of China%
}%
}

\iclrfinalcopy

\fancypagestyle{arxivheader}{%
    \fancyhead{}%
    \lhead{%
        \raisebox{-0.2\height}{\includegraphics[height=10pt]{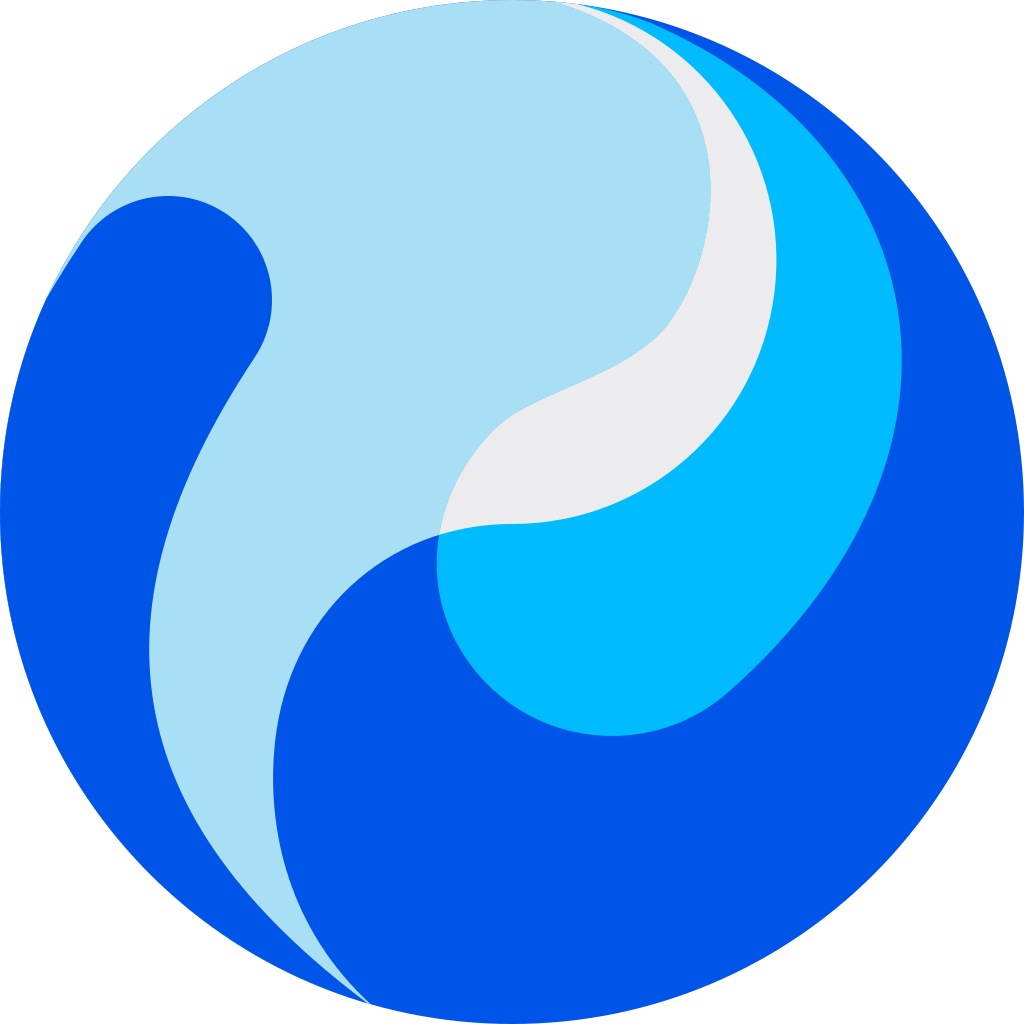}}%
        \hspace{3pt}\textsf{Tencent Hunyuan}%
    }%
    \renewcommand{\headrulewidth}{1pt}%
}

\begin{document}

\maketitle
\fancyhead{}
\thispagestyle{arxivheader}

\begin{figure}[!h]
    \vspace{-0.8cm}
    \centering
    \includegraphics[width=0.9\linewidth]{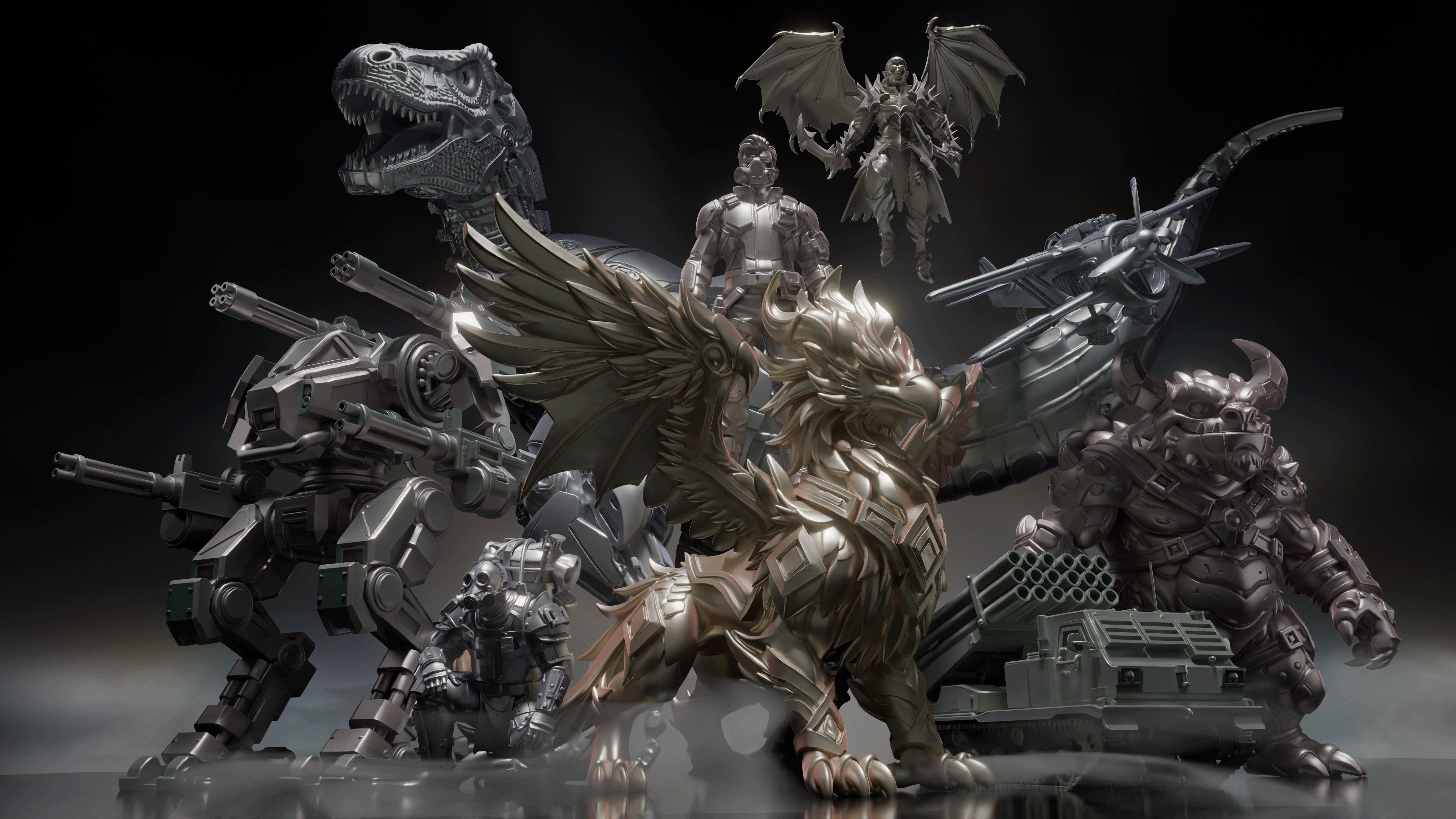}
    \vspace{-0.2cm}
    \caption{We generate diverse 3D assets with plausible global shapes and detailed geometry.}
    \label{fig:teaser}
\end{figure}

\begin{abstract}
Flow matching is central to 3D generation, yet in practice its reinforcement learning (RL) methods are largely adapted from 2D visual generation. Representative DPO-, GRPO-, and NFT-style objectives, when applied to negative trajectories, mainly steer predicted velocities away from the corresponding directions without explicitly specifying a target velocity field toward preferred samples. In 3D generation, constrained by pretrained model capabilities, rollout diversity, and reward-distribution complexity, directly applying these RL methods yields limited gains in geometric quality. We introduce a forward-process RL method \textbf{Dynamic Homing Optimization (DHO)}, which reformulates negative-trajectory optimization as positive-sample attraction-guided dynamic homing. Specifically, Minimum-Cost Attractive Matching (MAM) assigns each negative sample a distinct positive target, and Time-Aware Dynamic Correction (TDC) then redirects its trajectory toward the target using a remaining-time-aware corrective velocity. Building on asynchronous online DHO, we develop \textbf{Flow3D-Pro}, an image-to-3D geometry generation framework. Experiments show that DHO outperforms representative DPO-, GRPO-, and NFT-style objectives in 3D generation, while Flow3D-Pro produces higher-quality 3D geometry than existing mesh generation methods.
\end{abstract}

\vspace{-0.4cm}
\section{Introduction}

3D generation has become a cornerstone of modern content creation, powering high-fidelity asset synthesis~\cite{zhang2024clay,hunyuan3d25,liu2026dreamcharacter13d, Mesh-Pro} for gaming, and increasingly underpinning embodied intelligence. At the heart of this progress lie continuous-time generative models---particularly flow matching and its rectified flow instantiation~\cite{flow_matching,retified_flow}, which learn velocity fields from linear conditional paths between paired prior and data samples and now serve as the backbone of recent 3D generators~\cite{xiang2026trellis2,lai2025lattice,li2026triflow}. Yet 3D generation advances have concentrated mostly on pre-training, while reinforcement learning (RL) post-training remains comparatively underexplored in the flow-based setting. For want of dedicated formulations, current practice~\cite{Flow3D-OPD} transplants RL algorithms developed for 2D visual generation, chiefly DPO-style~\cite{Qwen-Image-TR} (e.g., Flow-DPO \cite{Flow-DPO}), GRPO-style~\cite{MixGRPO,DanceGRPO} (e.g., Flow-GRPO\cite{Flow-GRPO}) and NFT-style~\cite{li2025uniworldv2} (e.g., DiffusionNFT \cite{DiffusionNFT}).

In Figure~\ref{fig:Intro_RL_Comparison}, for the negative-sample branch objective, DPO-style methods implicitly suppress the likelihood of the negative trajectory, and GRPO-style methods reduce the probability of samples with negative advantages. NFT-style methods reverse the gradient through an implicit negative policy, mainly driving the predicted velocity away from the negative-trajectory target. Despite differing in mechanics, these paradigms share one trait in how they treat a negative sample: these objectives mainly repel the velocity directions that produce the corresponding trajectories, but do not explicitly specify a target velocity field that would instead generate preferred samples. They mainly rely on the positive-sample branch objective to guide the velocity field toward generating desirable trajectories. When applied to post-training for 3D generation, our empirical results find that, constrained by the capabilities of the pretrained foundation model, rollout diversity, and the complexity of the reward distribution, these RL methods yield limited gains in geometric quality.

\begin{figure*}[!t]
    \centering
    \includegraphics[width=1.0\textwidth]{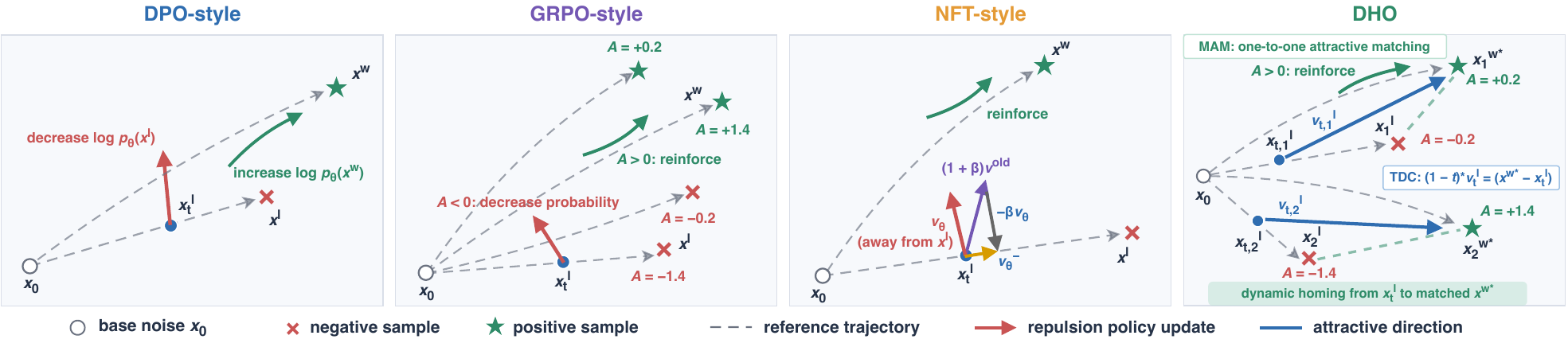}
    \vspace{-0.6cm}
    \caption{Negative-sample branch objectives across flow matching RL methods.}
    \label{fig:Intro_RL_Comparison}
    \vspace{-0.4cm}
\end{figure*}

Inspired by recent positive-sample attraction approaches~\cite{Drifting_Model}, we propose \textbf{Dynamic Homing Optimization (DHO)}, which constructs negative-sample branch objectives with attraction-guided dynamic homing. DHO explicitly specifies both a preferred-sample destination for each negative trajectory and the corrective dynamics required to reach it. Specifically, we introduce Minimum-Cost Attractive Matching (MAM) to formulate preferred-target selection as a batchwise minimum-cost one-to-one assignment, pairing each negative sample with a distinct positive sample. Given the assigned target, Time-Aware Dynamic Correction (TDC) constructs a remaining-time-aware velocity that redirects each intermediate state toward its target. 

DHO performs policy updates on forward-process interpolants constructed from rollout endpoints, is sampler-flexible, and does not require storing entire sampling trajectories. In addition, we develop \textbf{Flow3D-Pro}, an image-to-3D mesh generation framework, which leverages asynchronous online DHO to improve geometric fidelity and structural plausibility. Our main contributions are as follows:
\begin{itemize}
    \item We present Dynamic Homing Optimization (DHO), a forward-process RL method for flow matching that equips the negative-sample branch objectives with positive-sample attraction-guided dynamic homing, and introduce Flow3D-Pro for high-quality image-to-3D mesh generation with asynchronous online DHO.
    \item In DHO, we introduce Minimum-Cost Attractive Matching (MAM) to assign each negative sample a distinct positive target through batchwise minimum-cost one-to-one matching, and Time-Aware Dynamic Correction (TDC) to construct a remaining-time-aware corrective velocity toward that preferred target.
    \item Experiments demonstrate that DHO outperforms representative DPO-, GRPO-, and NFT-style objectives in 3D generation, while Flow3D-Pro achieves higher geometry generation quality than current mesh generation methods.
\end{itemize}
\vspace{-0.2cm}
\section{Related Work}
\subsection{3D Mesh Generation}
Continuous-time generative models underpin modern synthesis: diffusion models established high-fidelity stochastic generation~\cite{DDPG,score_based_song,karras2022edm}, while flow matching~\cite{flow_matching} and its rectified flow instantiation~\cite{retified_flow} learn velocity fields for transport, with related stochastic-interpolant and fast-sampling variants further improving efficiency~\cite{albergo2023stochastic,song2023consistencymodels,InstaFlow,PeRFlow}. In 3D generation, DreamFusion~\cite{poole2022dreamfusion} and Magic3D~\cite{lin2023magic3d} popularized 3D mesh optimization via 2D diffusion distillation, followed by higher-quality variants~\cite{wang2023prolificdreamer,chen2023fantasia3d}. Native 3D generators have progressed from point or implicit representations~\cite{nichol2022point,jun2023shape} to SDF and shape-latent models~\cite{cheng2023sdfusion,zhang2023shape2vecset,zhao2023michelangelo}, as well as textured-mesh and controllable asset pipelines~\cite{gao2022get3d,zhang2024clay,li2024craftsman3d}. Recent rectified-flow 3D systems---including TRELLIS~\cite{xiang2024trellis, xiang2026trellis2}, Hunyuan3D~\cite{hunyuan3d20,hunyuan3d21, hunyuan3d25}, TripoSG~\cite{li2025triposg}, LATTICE~\cite{lai2025lattice} and other mesh flow generators~\cite{sun2026meshflow,li2026triflow}---mainly study pre-training, representation, or sampling, whereas DHO studies RL post-training for 3D rectified-flow geometry generators.

\subsection{Reinforcement Learning for Flow Matching}
RL was adopted early for language-model alignment~\cite{guo2025deepseekr1, rafailov2023direct}, and later extended to visual generation using human feedback~\cite{kirstain2023pickapic} and reward-guided fine-tuning algorithms~\cite{xu2023imagereward, lee2023aligning}. Diffusion reinforcement spans policy-gradient tuning (DDPO~\cite{black2023trainingdiffusionrl}), direct preference optimization (Diffusion-DPO~\cite{DiffusionDPO}), and preference-tuned visual generation variants~\cite{fan2023dpok,clark2023direct}. For flow matching, representative objectives: GRPO-style methods~\cite{Flow-GRPO,Qwen-Image-2.0-RL}, represented by Flow-GRPO~\cite{Flow-GRPO}, treat generation as online RL and penalize low-reward rollouts, with related variants refining stability, reward density or policy granularity~\cite{DanceGRPO,deng2026densegrpo,luo2025gcpo,MixGRPO}. DPO-style adaptations suppress negative trajectories through pairwise preference ratios~\cite{Flow-DPO,Qwen-Image-TR}. NFT-style methods~\cite{li2025uniworldv2}, represented by DiffusionNFT~\cite{DiffusionNFT}, inject negative velocity-space gradients on the forward process. In 3D generation, flow-based RL methods (e.g., Flow3D-OPD~\cite{Flow3D-OPD}) are primarily adapted from these 2D visual generation algorithms. However, these objectives mainly push the velocity away from directions that produce negative trajectories, without explicitly specifying a target velocity field that guides generation toward preferred samples, limiting gains in geometric quality. DHO addresses this by dynamically homing negative samples toward preferred regions. PFM~\cite{kim2024preferencealignmentflowmatching} trains an additional preference-flow model on rejected-to-preferred interpolants, and applies it to outputs from the frozen base model at inference, potentially introducing a source-distribution mismatch. DHO directly updates the original generative base field on noise-to-negative interpolants using matched positive endpoints.

\section{Methodology}

\begin{figure*}[!t]
    \centering
    \includegraphics[width=0.98\textwidth]{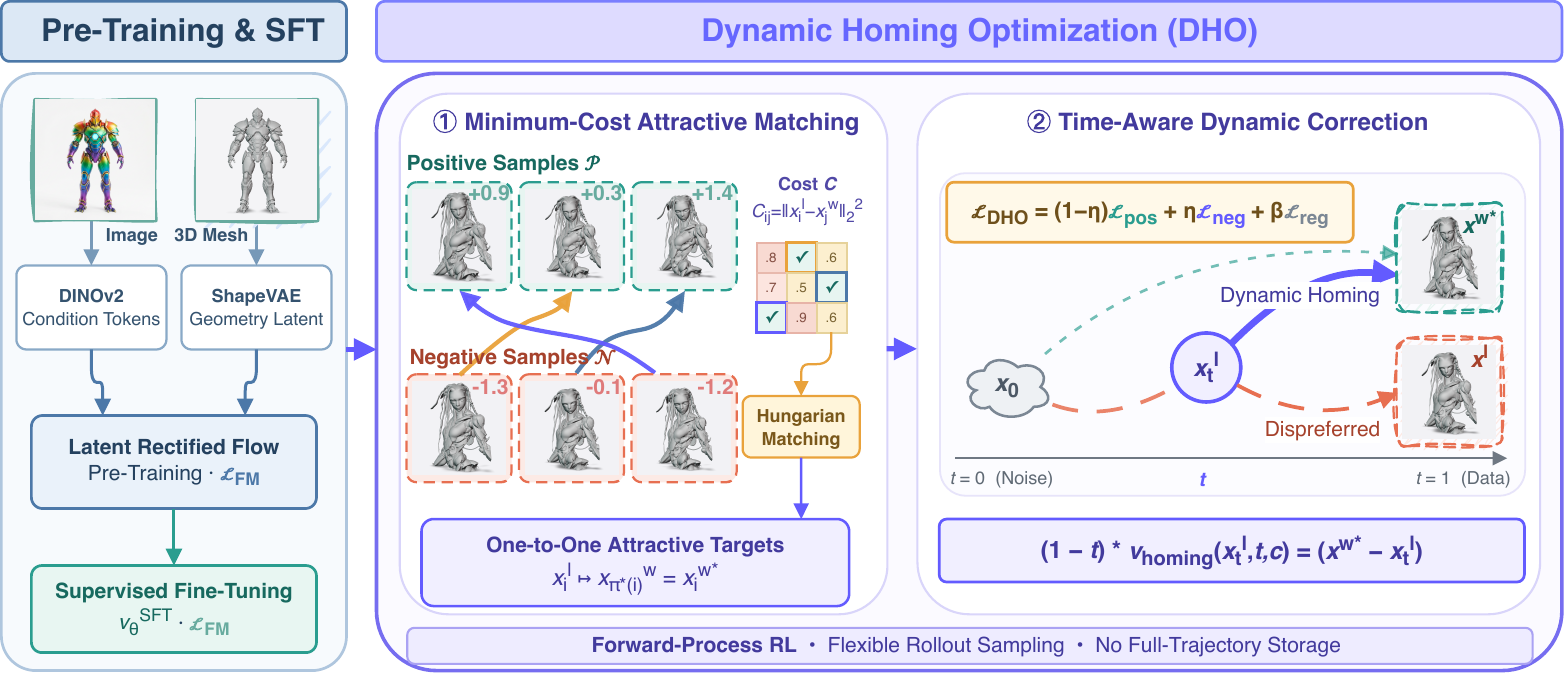}
    \vspace{-0.2cm}
    \caption{\textbf{Method Overview.} After pre-training and supervised fine-tuning (SFT), DHO post-trains the rectified flow model in latent feature space using reward-ranked rollout samples. In DHO, MAM assigns each negative sample an attractive positive-target destination, and TDC then constructs a time-aware homing velocity toward the assigned target. We visualize only the negative-sample branch, as the positive-sample branch uses the same MAM rule and homing-velocity construction.}
    \vspace{-0.2cm}
    \label{fig:dho_overview}
\end{figure*}

\subsection{Rectified Flow as Policy Optimization}
\label{sec:prelim}
We view rectified flow as a continuous-time policy $v_\theta(x_t, t, c)$ that predicts a velocity (action) given state $x_t$, time $t \in [0,1]$, and condition $c$. Starting from a Gaussian prior $x_0 \sim p_0 = \mathcal{N}(0, \mathbf{I})$, generation follows the velocity field $v_\theta$ and returns the terminal state $x_1 \sim p_1$ (target data distribution). The policy is trained with the flow-matching objective on the linear path $x_t = (1-t)\,x_0 + t\,x_1$,
\begin{equation}
    \label{eq:fm}
    \mathcal{L}_{\text{FM}}(\theta) = \mathbb{E}_{t \sim \mathcal{U}(0,1),\, x_0 \sim p_0,\, x_1 \sim p_1}
    \left[ \big\| v_\theta(x_t, t, c) - (x_1 - x_0) \big\|_2^2 \right].
\end{equation}
In preference optimization, each condition $c$ is associated with positive samples $\mathcal{P}$ and negative samples $\mathcal{N}$, and policy optimization reshapes $v_\theta$ so that trajectories terminate on $\mathcal{P}$ rather than $\mathcal{N}$.

\subsection{Dynamic Homing Optimization}
Let $x^w \in \mathcal{P}$ and $x^l \in \mathcal{N}$ denote a positive sample and a negative sample, and let $x_t^l = (1-t)x_0 + t\,x^l$ be an intermediate state along the negative trajectory. We propose \textbf{Dynamic Homing Optimization (DHO)}, an attraction-guided navigation paradigm (as illustrated in Figure~\ref{fig:Intro_RL_Comparison} and Figure~\ref{fig:dho_overview}) with two main components: Minimum-Cost Attractive Matching (MAM), which assigns each negative sample a distinct attractive target within the batch, and Time-Aware Dynamic Correction (TDC), which constructs a remaining-time-aware velocity toward that target. Although our analysis focuses primarily on negative-sample branch objectives, the same target-assignment and velocity-target construction also apply to positive-sample branch objectives. 

\subsubsection{Minimum-Cost Attractive Matching}
\label{sec:assignment}
We formulate attractive target selection as a minimum-cost one-to-one assignment problem within each mini-batch. In latent feature space, given $B$ negative states $\hat{\mathcal{N}} = \{x_i^l\}_{i=1}^{B}$ and $B$ positive states $\hat{\mathcal{P}} = \{x_j^w\}_{j=1}^{B}$, we form the cost matrix $\mathbf{C} \in \mathbb{R}^{B \times B}$,
\begin{equation}    
    \label{eq:cost}
    C_{i,j} = \mathcal{D}(x_i^l, x_j^w) = \big\| x_i^l - x_j^w \big\|_2^2 ,
\end{equation}
and seek the permutation $\boldsymbol{\pi}^*$ that minimizes the total assignment cost:
\begin{equation}
    \label{eq:lsa}
    \begin{split}
    \boldsymbol{\pi}^* &= \arg\min_{\boldsymbol{\pi} \in \Pi_B}
    \sum_{i=1}^{B} \sum_{j=1}^{B} C_{i,j}\, \pi_{i,j}, \\
    \Pi_B &= \Big\{ \boldsymbol{\pi} \in \{0,1\}^{B \times B} : \textstyle\sum_i \pi_{i,j} = \sum_j \pi_{i,j} = 1 \Big\}.
    \end{split}
\end{equation}
Here $\Pi_B$ denotes the set of $B\times B$ permutation matrices. This assignment can be viewed as minibatch optimal transport between the uniform empirical distributions of negative and positive rollout endpoints~\cite{minibatch_ot}. Eq.~\eqref{eq:lsa} is a linear sum assignment problem solved in $\mathcal{O}(B^3)$ by the Hungarian algorithm. For a selected optimal permutation, $\pi^*(i)$ denotes the unique column index $j$ with $\pi^*_{i,j}=1$. The assigned attractive target of a negative sample is
\begin{equation}
    \label{eq:assign}
    x_i^{w*} = x_{\pi^*(i)}^{w}.
\end{equation}
Moreover, the same assignment principle extends to the positive-sample branch objective: applying Eq.~\eqref{eq:lsa} between $\hat{\mathcal{P}}$ and itself admits the identity permutation as an optimal assignment because the diagonal has zero cost, $C_{i,i}=\|x_i^w-x_i^w\|_2^2=0$. Choosing this assignment matches every positive sample to itself, $x_i^{w*}=x_i^{w}$, so both branches are governed by the single assignment rule of Eq.~\eqref{eq:assign}.

Given a rollout group of $G$ samples with rewards $\{r_i\}_{i=1}^{G}$, we compute the group-normalized advantage $A_i=(r_i-\bar{r})/s_r$, where $\bar{r}$ and $s_r$ are the group mean and standard deviation. Samples with $A_i>0$ are designated as positive, whereas those with $A_i<0$ are designated as negative. Zero-advantage samples are omitted. To construct equally sized sets for MAM, let $B$ be the smaller set size. If positive samples are more numerous, we retain the $B$ samples with the highest advantages. If negative samples are more numerous, we randomly select $B$ of them. For each retained positive sample, we define the normalized advantage weight $\omega_i^w=A_i^w/(\frac{1}{B}\sum_{j=1}^{B}A_j^w)$. After MAM, a negative sample inherits the weight of its assigned positive target, $\omega_i^{w*}=\omega_{\pi^*(i)}^w$.

\subsubsection{Time-Aware Dynamic Correction}
\label{sec:corrective}
Given the assigned target $x_i^{w*}$, a naive choice is the static parallel action $x^{w*} - x_0$. This is incorrect for a negative trajectory: the state $x_t^l$ already lies off the straight path to $x^{w*}$, so integrating a constant velocity accumulates a terminal error proportional to $t\,(x^l - x^{w*})$. To account for this residual displacement, the action must depend on both the current state and the remaining time.

We therefore define a time-aware corrective field that points from the current state to the target and rescales by the remaining horizon $1-t$, so that the residual displacement is covered at $t=1$:
\begin{equation}
    \label{eq:correction}
    v_{\text{target}}^{\text{correction}}(x_t^l, t, c) = \frac{x^{w*} - x_t^l}{1 - t}.
\end{equation}
Integrating Eq.~\eqref{eq:correction} directs the state to the endpoint $x^{w*}$.

The factor $1/(1-t)$ in Eq.~\eqref{eq:correction} diverges as $t \to 1$. To obtain a numerically stable regression target for the negative branch, we adopt the time-dependent weight $w_{\text{correction}}(t) = (1-t)^2$, which cancels the singularity: weighting the squared residual of ($v_\theta - v_{\text{target}}^{\text{correction}}$) by $w_{\text{correction}}(t)$ is equivalent to matching the rescaled velocity $(1-t)\,v_\theta$ against a bounded target. Substituting Eq.~\eqref{eq:correction} and $x_t^l = t\,x^l + (1-t)x_0$ gives
\begin{equation}
    \label{eq:correction-simplify}
    (1-t)\, v_{\text{target}}^{\text{correction}}(x_t^l, t, c) = x^{w*} - x_t^l = (x^{w*} - x_0) - t\,(x^l - x_0),
\end{equation}
which is smooth on all of $[0,1]$. Weighting the correction by the normalized advantage of its assigned positive target yields the policy optimization objective for the negative-sample branch:
\begin{equation}
\label{eq:neg-loss}
\begin{split}
    \mathcal{L}_{\text{neg}}(\theta) = \mathbb{E}_{t, x_0, x^l} \Big[ \omega^{w*} \big\| (1-t)\,v_\theta(x_t^l, t, c) - \big( (x^{w*} - x_0) - t\,(x^l - x_0) \big) \big\|_2^2 \Big].
\end{split}
\end{equation}
Eq.~\eqref{eq:neg-loss} gives stronger correction weights toward higher-advantage positive targets while keeping the signal well-conditioned across the whole time horizon. Since $\omega^{w*}>0$ only rescales the regression loss, it preserves the homing target of Eq.~\eqref{eq:correction-simplify}.

\label{sec:objective}
The positive-sample branch shares the MAM rule and homing-velocity construction with the negative-sample branch. Self-matching the positive set returns the identity, so a positive sample is assigned to itself, $x^{w*}=x^{w}$. Substituting this target into the time-aware field of Eq.~\eqref{eq:correction} and using $x_t^w = (1-t)x_0 + t\,x^w$ collapses the correction to the standard rectified-flow action:
\begin{equation}
    \label{eq:pos-degenerate}
    v_{\text{target}}^{\text{correction}}(x_t^w, t, c) = \frac{x^{w} - x_t^w}{1-t}
    = \frac{(1-t)\,(x^{w} - x_0)}{1-t} = x^{w} - x_0 .
\end{equation}
Since the self-matched velocity target has no endpoint singularity, we retain the rectified-flow action with positive-advantage weighting and no additional temporal weighting:
\begin{equation}
    \label{eq:pos-loss}
    \mathcal{L}_{\text{pos}}(\theta) = \mathbb{E}_{t, x_0, x^w}
    \left[ \omega^w \big\| v_\theta(x_t^w, t, c) - (x^w - x_0) \big\|_2^2 \right].
\end{equation}

\subsubsection{Full Optimization Objective}
\label{sec:reg}
To constrain policy deviation during post-training, we regularize the learned velocity field toward a frozen reference policy $v_{\theta_{\text{ref}}}$. Specifically, we use the squared velocity deviation as a likelihood-free reference-policy regularizer:
\begin{equation}
    \label{eq:reg}
    \mathcal{L}_{\text{reg}}(\theta) = \mathbb{E}_{t,\, x_0,\, x \in \hat{\mathcal{P}} \cup \hat{\mathcal{N}}}
    \left[ \big\| v_\theta(x_t, t, c) - v_{\theta_{\text{ref}}}(x_t, t, c) \big\|_2^2 \right].
\end{equation}
Combining Eqs.~\eqref{eq:neg-loss},~\eqref{eq:pos-loss}, and~\eqref{eq:reg}, the full DHO objective jointly anchors positive trajectories, dynamically homes negative ones, and regularizes toward the reference policy:
\begin{equation}
    \label{eq:total}
    \mathcal{L}_{\text{DHO}}(\theta) = \eta \, \mathcal{L}_{\text{neg}}(\theta) 
    + (1-\eta) \, \mathcal{L}_{\text{pos}}(\theta)
    + \beta \, \mathcal{L}_{\text{reg}}(\theta),
\end{equation}
where $\eta \in (0,1)$, $\beta > 0$ control the trade off between homing strength and reference regularization. 

\subsubsection{Endpoint Behavior Analysis}
DHO explicitly defines a homing field toward a positive target. We analyze the endpoint behavior of DHO's homing field from two perspectives: the converged endpoint and the training dynamics. Since optimizing $x^w$ is standard flow matching, we mainly focus on the negative branch $x^l$. 

\begin{proposition}[]
\label{prop:mode_transformation}
Let $x_0 \sim p_0$ be a base noise sample that deterministically maps to a negative sample $x^l$ under the pre-trained reference policy $v_{\theta_{\text{ref}}}$. As the policy $v_\theta$ converges to the optimal dynamic homing field $v^*(x_t, t, c) = \frac{x^{w*} - x_t}{1-t}$ via training, the generative mapping $F_\theta: x_0 \mapsto x_1$ shifts its terminal endpoint such that $F_{\theta^*}(x_0) = x^{w*}$.
\end{proposition}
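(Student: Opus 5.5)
The plan is to treat the converged policy as the explicit field $v^*(x_t,t,c)=\frac{x^{w*}-x_t}{1-t}$ and integrate the generative ODE $\dot x_t = v^*(x_t,t,c)$ from $t=0$ with initial condition $x_0$. The endpoint then follows from solving a linear ODE in closed form. The target $x^{w*}$ is fixed here: it is the MAM assignment of Eq.~\eqref{eq:assign} for the negative sample $x^l=F_{\theta_{\text{ref}}}(x_0)$. So along this trajectory $v^*$ is an affine, time-dependent field with no dependence on $\theta$.

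\textbf{Step 1 (closed form).} Introduce the displacement $e_t = x_t - x^{w*}$. It satisfies $\dot e_t = -\frac{e_t}{1-t}$, a separable linear ODE, with solution $e_t = (1-t)\,e_0 = (1-t)(x_0-x^{w*})$. Hence
\begin{equation}
x_t = (1-t)\,x_0 + t\,x^{w*},
\end{equation}
which is the straight rectified-flow path from $x_0$ to $x^{w*}$. This matches Eq.~\eqref{eq:pos-degenerate}, since along this path $v^*=x^{w*}-x_0$ is constant. By uniqueness for Lipschitz ODEs on $[0,1-\epsilon]$, this is the only trajectory from $x_0$.

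\textbf{Step 2 (endpoint).} Take $\epsilon\to 0$. Because $\|x_t-x^{w*}\|=(1-t)\|x_0-x^{w*}\|\to 0$, the trajectory extends continuously to $t=1$ and $F_{\theta^*}(x_0)=\lim_{t\to1}x_t=x^{w*}$. I would also note that the trained quantity is the rescaled velocity $(1-t)v_\theta$ of Eq.~\eqref{eq:neg-loss}. Its target $x^{w*}-x_t$ is bounded, so the field is integrable even though $v^*$ itself is singular at $t=1$.

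\textbf{Step 3 (link to training).} I would argue that $v^*$ is the minimizer of $\mathcal{L}_{\text{neg}}$ at the states that matter. For fixed $(x_t,t)$ the weighted squared loss is minimized pointwise by the conditional target. Along the DHO trajectory started at $x_0$, that target is exactly $v^*$. Convergence of $v_\theta$ to $v^*$ is an assumption of the proposition, so I would take it as given rather than prove it.

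\textbf{Main obstacle.} The delicate point is the mismatch between where $v^*$ is supervised and where it is evaluated. Training regresses on states $x_t^l=(1-t)x_0+t x^l$ along the negative path, whereas integration visits states on the path to $x^{w*}$. The closed-form argument sidesteps this by treating $v^*$ as a field defined at every state $x_t$, as the statement does. I would flag explicitly that the statement concerns this idealized field, together with the singular behaviour at $t=1$, which is handled by the limit in Step 2.
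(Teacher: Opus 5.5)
Your proposal is correct and follows essentially the paper's route: the paper also solves the linear ODE $\dot x_t = (x^{w*}-x_t)/(1-t)$ with $x(0)=x_0$ in closed form, obtains $x_t = t\,x^{w*} + (1-t)\,x_0$, and takes $t\to 1$; the only difference is that the paper uses the integrating factor $1/(1-t)$ where you use the substitution $e_t = x_t - x^{w*}$. Your uniqueness argument on $[0,1-\epsilon]$ and your flag about the mismatch between supervised and visited states are sound additions the paper does not make, and Step 3 is not needed because convergence to $v^*$ is an assumption of the proposition.
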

This characterizes the converged endpoint: as the policy converges to the optimal homing field, the deterministic map $F_\theta$ is redirected so that the specific prior sample $x_0$ that previously generated $x^l$ is routed to its assigned positive mode $x^{w*}$.

\begin{proposition}[]
\label{prop:training_dynamics}
For a fixed $x^l$ and its assigned $x^{w*}$, assume $ v_\theta^{\text{init}}(x, t) = \frac{x^l - x}{1-t}$ for $t \in [0,1)$. Let the selected policy snapshots satisfy the idealized velocity update $v_\theta^{\tau+1}(x, t) = v_\theta^\tau(x, t) + \alpha \big( \frac{x^{w*} - x}{1-t} - v_\theta^\tau(x, t) \big)$ pointwise over $(x,t)$, where $\tau \ge 0$ indexes successive policy snapshots rather than necessarily
every raw SGD step and $\alpha \in (0,1)$ is an effective contraction factor between consecutive snapshots. Then the terminal generation is $x_1(\tau) = \big(1 - (1-\alpha)^\tau\big) x^{w*} + (1-\alpha)^\tau x^l$, and hence $\lim_{\tau \to \infty} x_1(\tau) = x^{w*}$.
\end{proposition}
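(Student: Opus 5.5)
The plan is to solve the affine recursion for the velocity field in closed form, show that each snapshot is again a homing field of the form in Eq.~\eqref{eq:correction}, and then integrate the resulting linear ODE exactly.

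\textbf{Step 1: closed form of the snapshots.} Unrolling the idealized update pointwise in $(x,t)$ gives
\[
v_\theta^{\tau} - v^* = (1-\alpha)\,(v_\theta^{\tau-1}-v^*) = (1-\alpha)^\tau (v_\theta^{\text{init}} - v^*),
\]
where $v^*(x,t)=\frac{x^{w*}-x}{1-t}$. Substituting $v_\theta^{\text{init}}$ yields
\[
v_\theta^\tau(x,t) = \frac{m_\tau - x}{1-t}, \qquad m_\tau := \big(1-(1-\alpha)^\tau\big)x^{w*} + (1-\alpha)^\tau x^l .
\]
Thus every snapshot is exactly a homing field, with a target $m_\tau$ that is a convex combination of $x^{w*}$ and $x^l$.

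\textbf{Step 2: terminal state of each snapshot.} For each fixed $\tau$, integrate $\dot x_t = \frac{m_\tau - x_t}{1-t}$ from $x_0$. Setting $y_t = x_t - m_\tau$ gives the linear ODE $\dot y_t = -y_t/(1-t)$. Its solution is $y_t=(1-t)y_0$, that is,
\[
x_t = (1-t)\,x_0 + t\,m_\tau .
\]
This is the straight rectified-flow path from $x_0$ to $m_\tau$. It is consistent with Eq.~\eqref{eq:pos-degenerate} and with the argument behind Proposition~\ref{prop:mode_transformation}, which is the special case $\tau\to\infty$.

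\textbf{Step 3: the endpoint and its limit.} Define $x_1(\tau) := \lim_{t\to1^-} x_t = m_\tau$, which is exactly the claimed formula. Since $0<1-\alpha<1$, we have $(1-\alpha)^\tau\to0$, so $x_1(\tau)\to x^{w*}$.

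\textbf{Main obstacle: the singularity at $t=1$.} The field $\frac{m_\tau - x}{1-t}$ is not defined at $t=1$, so the terminal state must be understood as the left limit. I would handle this as follows:
\begin{itemize}
\item On every interval $[0,T]$ with $T<1$, the field is globally Lipschitz in $x$, so the solution exists and is unique there.
\item The explicit solution $x_t=(1-t)x_0+t\,m_\tau$ is affine in $t$ and therefore extends continuously to $t=1$.
\end{itemize}
Hence $x_1(\tau)$ is well defined. I would also note that the pointwise update over all $(x,t)$, including off-trajectory states, is what keeps the snapshot field exactly of homing form, which is why the explicit integration in Step 2 is legitimate.
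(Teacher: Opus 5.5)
Your proposal is correct and follows essentially the same route as the paper: contract the residual $v_\theta^\tau - v^*$ geometrically, recognize each snapshot as a homing field with target $m_\tau$, integrate the linear ODE to get $x_t = (1-t)\,x_0 + t\,m_\tau$, and let $\tau\to\infty$. The only differences are minor. You solve the ODE by the shift $y_t = x_t - m_\tau$ instead of reusing the integrating-factor computation from Proposition~\ref{prop:mode_transformation}. You also justify the terminal state at the singular time $t=1$ via Lipschitz well-posedness on $[0,T]$ for every $T<1$ plus continuous extension, which is slightly more careful than the paper's direct limit.
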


\begin{wrapfigure}{r}{0.50\textwidth}
    \centering
    \vspace{-0.4cm}
    \includegraphics[width=\linewidth]{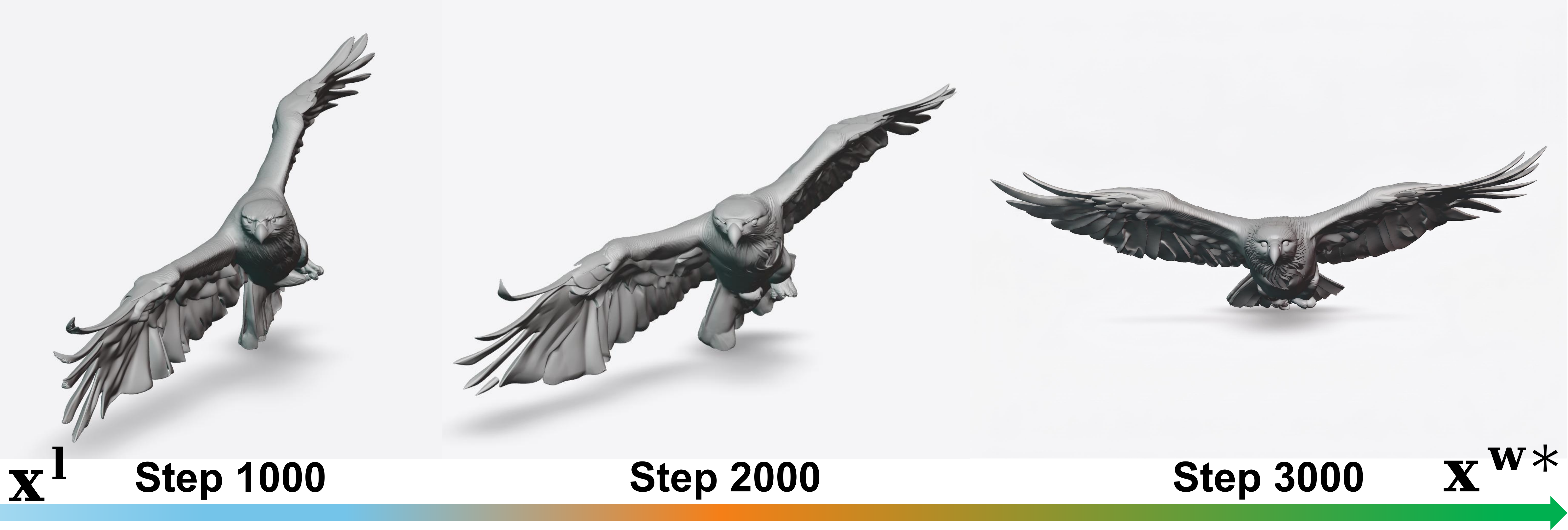}
    \caption{The generated owl gradually becomes canonically aligned during DHO training.}
    \vspace{-0.5cm}
    \label{fig:convergence_training_process}
\end{wrapfigure}
This is an idealized field-space analysis of how the terminal endpoint of a particular negative generation evolves when selected policy snapshots contract pointwise toward its homing field. $x_1(\tau)$ interpolates between $x^l$ and $x^{w*}$ with weight $(1-\alpha)^\tau$, closing the residual gap at a geometric rate. Figure~\ref{fig:convergence_training_process} illustrates this behavior qualitatively.

Moreover, MAM is recomputed across mini-batches and may reassign the positive target. These propositions therefore apply conditionally to a fixed assignment and piecewise to intervals with different assignments: each interval supplies a constructive correction toward its current positive target. If the assignments eventually stabilize at $x^{w*}$, the stated recurrence predicts convergence to that target. As training progresses, the velocity field within each interval moves toward a more positive direction. Detailed proofs for these propositions are in the appendix.

\subsubsection{Classifier-Free Guidance for DHO}
\label{sec:cfg}
DHO is compatible with Classifier-Free Guidance (CFG), whose guidance scale balances exploration diversity and generation quality. During pre-training and SFT, the conditioning embedding $\mathbf{c}$ is randomly replaced with a learnable null embedding $\mathbf{c}_{\varnothing}$ with a fixed probability, enabling the model to learn both conditional and unconditional velocity fields. Given a guidance scale $s \ge 1$ ($s=1$ corresponding to the CFG-free setting), the guided velocity is
\begin{equation}
    \label{eq:cfg-infer}
    \hat{v}_\theta(x_t,t,\mathbf{c}) = v_\theta(x_t,t,\mathbf{c}_{\varnothing})
    + s\big(v_\theta(x_t,t,\mathbf{c}) - v_\theta(x_t,t,\mathbf{c}_{\varnothing})\big).
\end{equation}
We consistently apply Eq.~\eqref{eq:cfg-infer} to rollout generation, policy updates, and inference using the same guidance scale $s$. This consistency ensures that policy optimization is performed under the same guided dynamics used for exploration and exploitation. The DHO optimization objective remains unchanged except that $v_\theta$ is replaced by $\hat{v}_\theta$. 

\subsection{Flow3D-Pro}
Building on DHO, we develop \textbf{Flow3D-Pro}, a two-stage framework for image-to-3D mesh generation. Directly recovering detailed 3D geometry from a 2D reference image remains challenging. Following prevailing multi-stage paradigms~\cite{lai2025lattice,xiang2026trellis2,ROAR-3D}, Flow3D-Pro decomposes generation into a lifting stage (Stage I) and a refinement stage (Stage II). Stage I lifts the 2D reference image into 3D space to establish the global geometry and spatial arrangement of the mesh, whereas Stage II refines the resulting mesh with finer geometric details. The lifting stage is built upon Hunyuan3D 2.1~\cite{hunyuan3d21}, while the refinement stage is derived from LATTICE~\cite{lai2025lattice}.

Stage I establishes the global geometry and structural initialization---including the overall shape, pose, and spatial arrangement---that constrain subsequent refinement. We therefore focus post-training on Stage I, whose high output diversity during 2D-to-3D lifting provides a broader exploration space for RL to discover improved policies. Specifically, Stage I is initialized from pretrained weights and then optimized in two sequential post-training phases: SFT, followed by DHO. Stage II remains frozen at its pretrained weights and is used only at inference time to refine the geometry produced by Stage I, primarily by enhancing local surface details.

\subsubsection{Asynchronous Online DHO}
Flow3D-Pro rolls out $G$ meshes conditioned on each reference image. For each mesh, we render six views and provide them together with the reference image to Gemini 3.1 Pro, which serves as the reward model and assigns a scalar reward based on generation quality. Gemini evaluates each mesh along three dimensions: reconstruction fidelity, geometric plausibility, and orientation alignment. Reconstruction fidelity assesses the consistency of silhouette proportions and surface appearance with the reference image. Geometric plausibility considers the validity of the overall shape and its consistency across views. Orientation alignment evaluates whether the generated mesh is aligned with the coordinate axes, which helps the learning of symmetric structures. By jointly considering these three dimensions, Gemini assigns each mesh a reward score from 1 to 5 (higher is better).

In practice, reward evaluation by Gemini introduces substantial and highly variable latency across meshes. To improve training efficiency, we adopt an asynchronous online DHO framework following~\cite{Mesh-Pro}, decoupling mesh rollout and reward evaluation from policy optimization. We allocate $N$ GPUs to rollout and $M$ GPUs to policy updates. Each rollout worker holds a fixed snapshot of the latest released policy weight while generating a complete mesh group. Once Gemini evaluation is complete, the scored group is inserted into a rollout replay buffer. In parallel, the trainer workers continuously sample evaluated groups from the buffer and update the current policy using the DHO objective. The trainer periodically publishes a new policy snapshot, which rollout workers fetch before starting subsequent rollouts. Data generated by outdated policy versions are discarded to limit policy staleness.

We balance policy synchronization and replay-buffer utilization through an update-to-buffer ratio. Let $K$ denote the number of policy-update steps between two consecutive policy releases, $b$ the per-GPU training batch size, and $C$ the buffer capacity measured in training samples. We constrain
\begin{equation}
    \label{eq:async-update-buffer-ratio}
    \lambda_{\min} \leq \frac{K M b}{C} \leq \lambda_{\max},
\end{equation}
where $KMb$ is the number of training samples consumed between policy releases. This constraint balances insufficient learning from newly collected rewards against excessive reuse of stale rollout data, allowing rollout, reward evaluation, and DHO policy updates to proceed concurrently.

\section{Experiments}

\subsection{Dataset}
We collect artist-created and AI-generated meshes and process them through several steps: (a) data filtering, (b) watertight conversion, (c) point-cloud sampling and signed distance field (SDF) extraction, and (d) manual curation. Human annotators select high-quality meshes by considering global shape quality and local geometric details, resulting in 5K meshes for SFT. For DHO, we collect 1K diverse reference images, covering 3D and 2D cartoon styles as well as photorealistic subjects. The collection spans humans, animals, vehicles, architecture, plants and other object categories.

We assess generalization on two evaluation sets disjoint from training. The first, \textit{GenMesh-Test}, is our curated set of 200 challenging reference images (comparable in scale to the evaluation set of~\cite{lai2025lattice}) covering diverse object categories, visual styles, and levels of geometric complexity. The second, \textit{LATTICE-Bench}, is the evaluation set adopted by LATTICE~\cite{lai2025lattice}. 

\subsection{Implementation Details}
DHO is trained for 5K optimization steps with AdamW at a learning rate of $2.5\times10^{-6}$. Training is performed in bfloat16 with a per-GPU batch size of 2. We set $\eta=0.1$, $\beta=1.0$, the rollout group size to $G=8$, and the CFG scale to $3.0$. For asynchronous optimization, we allocate $N=28$ GPUs to rollout and $M=4$ GPUs to policy updates, with $\lambda_{\min}=4$, $\lambda_{\max}=8$, and $K=16$. Stage II is kept frozen during inference. Full training and inference configurations are in the appendix.

\subsection{Evaluation Metrics}
We evaluate the image-to-geometry generation through various metrics including ULIP~\cite{ULIP,ULIP-2}, Uni3D~\cite{Uni3D}, following~\cite{lai2025lattice,xiang2026trellis2}, alongside a comprehensive User Study (US) to assess perceptual quality. Human experts rank the generated meshes of U methods from best (U - 1) to worst (0), revealing relative preference relationships. We prioritize the User Study as our primary subjective metric for mesh generation.

\subsection{Comparison of Reinforcement Learning Objectives}
To facilitate a controlled comparison of RL objectives for Stage I, we evaluate the base model, SFT, DHO, and representative RL methods: GRPO-style, DPO-style and NFT-style. All RL variants start from the same SFT weight, share the reward model, and are evaluated on GenMesh-Test. Every Stage I output is processed by the same frozen Stage II refiner. We repeat each experiment with three different random seeds and report the mean results. Implementation details are in the appendix.

\begin{table*}[!t]
  \centering
  \caption{Comparison of RL objectives for Stage I, evaluated before and after the Stage II refiner.}
  \label{tab:main_baselines}
  \resizebox{0.95\linewidth}{!}{
  \begin{tabular}{l|ccc|ccc} 
    \toprule
    \multirow{2}{*}{Method} & \multicolumn{3}{c|}{Stage I} & \multicolumn{3}{c}{Stage II} \\
    \cmidrule(lr){2-4} \cmidrule(lr){5-7}
    & ULIP $\uparrow$ & Uni3D $\uparrow$ & User Study $\uparrow$ & ULIP $\uparrow$ & Uni3D $\uparrow$ & User Study $\uparrow$ \\
    \midrule
    Base       & 0.143 & 0.347 & 0.60 & 0.143 & 0.350 & 0.71 \\
    SFT        & 0.144 \textcolor{blue}{(+0.70\%)} & 0.351 \textcolor{blue}{(+1.15\%)} & 1.93 & 0.143 \textcolor{blue}{(+0.00\%)} & 0.352 \textcolor{blue}{(+0.57\%)} & 1.91 \\
    GRPO-style & 0.147 \textcolor{blue}{(+2.80\%)} & 0.357 \textcolor{blue}{(+2.88\%)} & 3.02 & 0.147 \textcolor{blue}{(+2.80\%)} & 0.358 \textcolor{blue}{(+2.29\%)} & 3.00 \\
    DPO-style  & 0.146 \textcolor{blue}{(+2.10\%)} & 0.357 \textcolor{blue}{(+2.88\%)} & 2.36 & 0.146 \textcolor{blue}{(+2.10\%)} & 0.358 \textcolor{blue}{(+2.29\%)} & 2.32 \\
    NFT-style  & 0.146 \textcolor{blue}{(+2.10\%)} & 0.355 \textcolor{blue}{(+2.31\%)} & 3.07 & 0.147 \textcolor{blue}{(+2.80\%)} & 0.357 \textcolor{blue}{(+2.00\%)} & 2.80 \\
    DHO        & \textbf{0.149} \textbf{\textcolor{blue}{(+4.20\%)}} & \textbf{0.359} \textbf{\textcolor{blue}{(+3.46\%)}} & \textbf{4.02} & \textbf{0.150} \textbf{\textcolor{blue}{(+4.90\%)}} & \textbf{0.360} \textbf{\textcolor{blue}{(+2.86\%)}} & \textbf{4.26} \\
    \bottomrule
  \end{tabular}
  }
\end{table*}

\begin{figure*}[!t]
    \centering
    \includegraphics[width=1.0\textwidth]{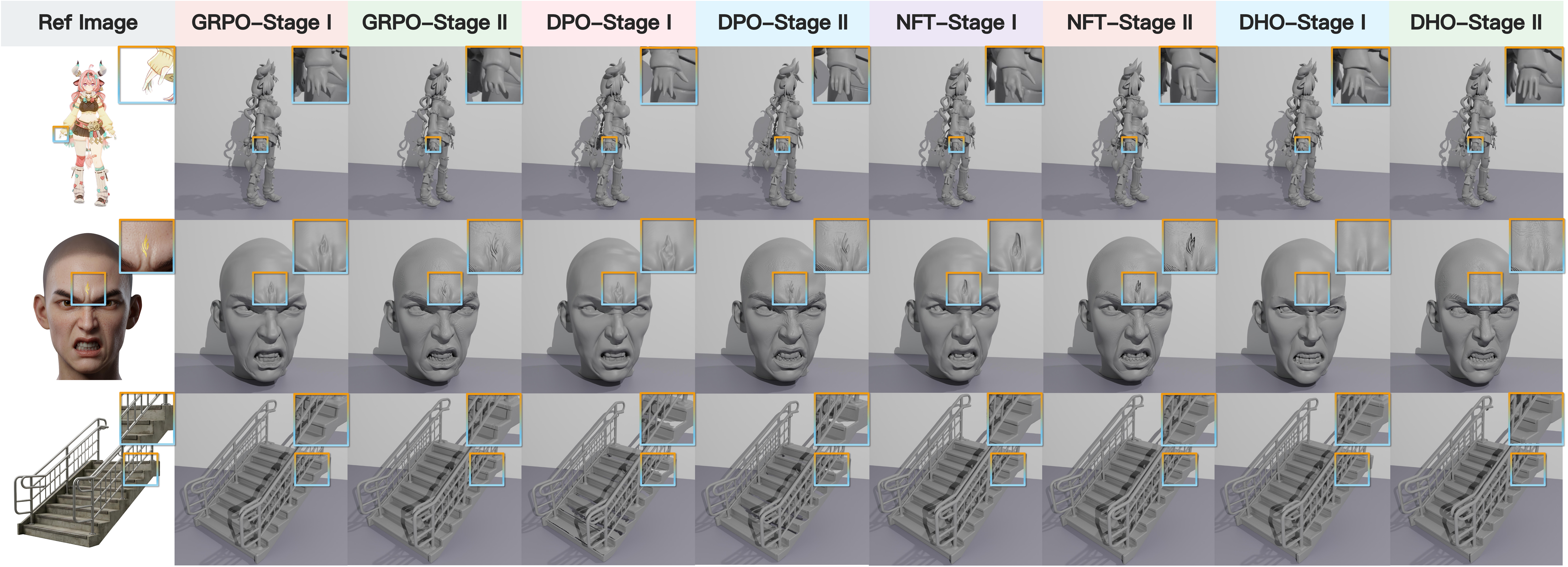}
    \caption{Qualitative comparison of RL objectives. To facilitate visualization, all generated meshes are rotated to a common orientation. The same convention is used in subsequent visualizations.}
    \label{fig:RL_Comparison}
\end{figure*}

Table~\ref{tab:main_baselines} shows that DHO obtains higher scores than the other RL methods, especially on the User Study, our primary metric, demonstrating the advantage of positive-sample attraction-guided dynamic homing. Moreover, the improved Stage I outputs provide a more favorable basis for Stage II refinement, indicating that better Stage I geometry carries over to higher final mesh quality.

Figure~\ref{fig:RL_Comparison} further illustrates qualitative differences among RL objectives. GRPO-style, DPO-style, and NFT-style produce implausible geometry (four fingers), convert image texture into unintended geometry (a flame-shaped forehead), or damage an otherwise intact structure (a broken staircase). These defects arise at Stage I and remain after Stage II refinement, suggesting that the quality of the lifting-stage geometry affects the refined output. In contrast, DHO achieves better geometric generation by attraction-guided navigation.

\begin{figure*}[!t]
    \centering
    \includegraphics[width=1.0\textwidth]{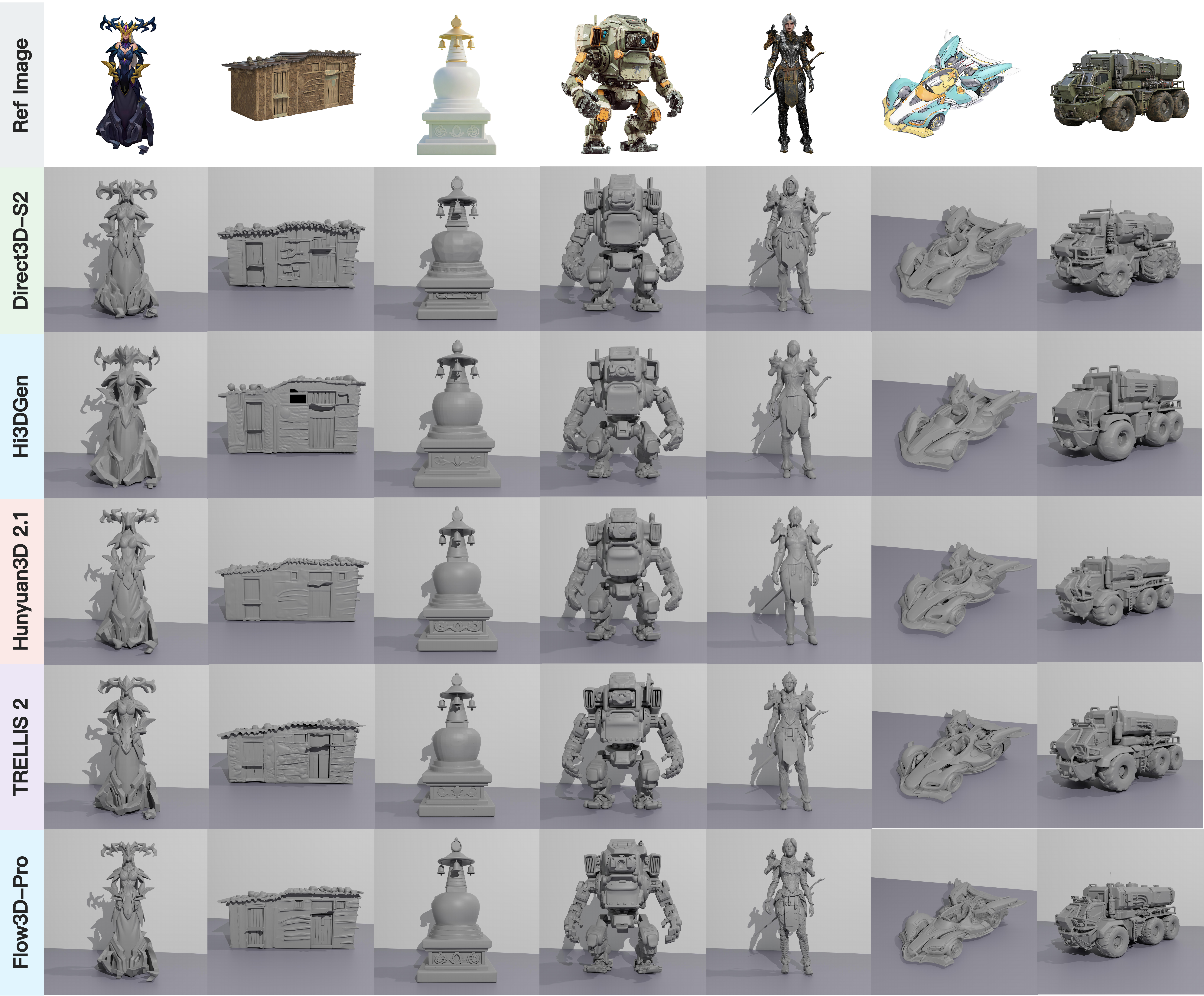}
    \caption{Visual comparison with representative 3D generation methods.}
    \vspace{-0.4cm}
    \label{fig:Comparison_Experiments}
\end{figure*}

\subsection{Comparison with Existing 3D Generation Methods}
\begin{wraptable}{r}{0.56\textwidth}
  \vspace{-0.4cm}
  \centering
  \caption{Quantitative evaluation on 3D mesh generation.}
  \label{tab:performance}
  \resizebox{\linewidth}{!}{
  \begin{tabular}{lccc}
    \toprule
    Method & ULIP $\uparrow$ & Uni3D $\uparrow$ & User Study $\uparrow$ \\
    \midrule
    Michelangelo~\cite{zhao2023michelangelo} & 0.135 & 0.290 & 0.18\\
    Craftsman 1.5~\cite{li2024craftsman3d} & 0.146 & 0.325 & 1.32 \\
    Hi3DGen~\cite{Hi3dgen} & 0.144 & 0.335 & 3.77 \\
    Direct3D-s2~\cite{Direct3D-S2} & 0.141 & 0.345 & 3.27 \\
    Hunyuan3D 2.1~\cite{hunyuan3d21} & 0.143 & 0.347 & 4.14 \\
    Trellis~\cite{xiang2024trellis} & 0.145 & 0.351 & 4.27 \\
    Flow3D-OPD~\cite{Flow3D-OPD} & 0.146 & 0.354 & 7.04 \\
    Trellis2~\cite{xiang2026trellis2} & 0.145 & 0.353 & 6.73 \\
    \midrule
    Flow3D-Pro-Base & 0.143 & 0.350 & 5.64 \\
    Flow3D-Pro & \textbf{0.150} & \textbf{0.360} & \textbf{8.64} \\
    \bottomrule
  \end{tabular}
  }
  \vspace{-0.4cm}
\end{wraptable}

We next compare Flow3D-Pro with representative image-to-3D mesh generators on GenMesh-Test. Flow3D-Pro-Base denotes the Flow3D-Pro variant that uses the pretrained Stage I model without post-training, while retaining the same frozen Stage II refiner.

As shown in Table~\ref{tab:performance}, after Stage I post-training, Flow3D-Pro achieves the highest geometry generation performance among the compared methods on all three metrics, reaching $0.150$ in ULIP, $0.360$ in Uni3D, and $8.64$ in User Study. In Figure~\ref{fig:Comparison_Experiments}, Flow3D-Pro achieves better reference-image adherence, more plausible geometry, and finer geometric details. These experiments demonstrate the role of DHO in improving geometric quality. The appendix provides additional component ablations, evaluations on LATTICE-Bench, and more qualitative results.


\section{Conclusion}
We present Dynamic Homing Optimization (DHO), a forward-process RL method for flow matching that equips negative-sample branch optimization with attraction-guided dynamic homing. DHO combines MAM to assign preferred positive targets with TDC to construct homing velocities toward them. We instantiate DHO in Flow3D-Pro using asynchronous online post-training. Experiments show that DHO outperforms representative DPO-, GRPO-, and NFT-style objectives in mesh generation, while Flow3D-Pro achieves superior image-to-3D geometry quality over existing methods.

\clearpage

\bibliographystyle{iclr2027_conference}
\bibliography{main}

\clearpage
\appendix

\startcontents[supplementary]
\section*{Appendix Contents}
\printcontents[supplementary]{}{1}{\setcounter{tocdepth}{2}}

\section{Methodology}
\subsection{Proofs for Endpoint Behavior Analysis Propositions}

We provide proofs for the endpoint behavior analysis propositions.

\noindent \textbf{Proposition 1.} \textit{Let $x_0 \sim p_0$ be a base noise sample that deterministically maps to a negative sample $x^l$ under the pre-trained reference policy $v_{\theta_{\text{ref}}}$. As the policy $v_\theta$ converges to the optimal dynamic homing field $v^*(x_t, t, c) = \frac{x^{w*} - x_t}{1-t}$ via training, the generative mapping $F_\theta: x_0 \mapsto x_1$ shifts its terminal endpoint such that $F_{\theta^*}(x_0) = x^{w*}$.}

\begin{proof}
After DHO training, we use ordinary differential equation (ODE) sampling for inference. The generation process is an Initial Value Problem (IVP) governed by the ODE:
\begin{equation}
    \frac{dx_t}{dt} = v^*(x_t, t, c) = \frac{x^{w*} - x_t}{1-t}, 
\end{equation}
with initial condition $x(0) = x_0$.
This is a first-order linear ordinary differential equation, which can be rewritten as:
\begin{equation}
    \frac{dx_t}{dt} + \frac{1}{1-t} x_t = \frac{x^{w*}}{1-t}
\end{equation}
We solve this ODE by multiplying both sides by the integrating factor $\mu(t) = \exp\left(\int \frac{1}{1-t} dt\right) = \frac{1}{1-t}$:
\begin{equation}
    \begin{split}
    \frac{1}{1-t}\frac{dx_t}{dt} + \frac{1}{(1-t)^2} x_t = \frac{x^{w*}}{(1-t)^2} \\
    \implies \frac{d}{dt}\left( \frac{x_t}{1-t} \right) = \frac{x^{w*}}{(1-t)^2}
    \end{split}
\end{equation}
Integrating both sides from $0$ to $t$ gives
\begin{equation}
    \begin{split}
    \frac{x_t}{1-t} - \frac{x_0}{1} &= \int_{0}^{t} \frac{x^{w*}}{(1-s)^2} ds \\
    &= x^{w*} \left[ \frac{1}{1-s} \right]_{0}^{t} \\
    &= x^{w*} \left( \frac{1}{1-t} - 1 \right)
    \end{split}
\end{equation}
Multiplying the entire equation by $(1-t)$ isolates $x_t$:
\begin{equation}
    x_t - x_0(1-t) = x^{w*} \left( 1 - (1-t) \right) = t x^{w*}
\end{equation}
Rearranging for $x_t$ yields the linear interpolation path toward the assigned target:
\begin{equation}
    x_t = t x^{w*} + (1-t) x_0
\end{equation}
Taking the limit as $t \to 1$ gives
\begin{equation}
    \lim_{t \to 1} x_t = 1 \cdot x^{w*} + 0 \cdot x_0 = x^{w*}
\end{equation}
This completes the proof. By minimizing the dynamic homing objective, the base noise $x_0$ that originally formulated the negative mode $x^l$ is successfully rewired to map to the positive mode $x^{w*}$.
\end{proof}

\noindent \textbf{Proposition 2.} \textit{For a fixed $x^l$ and its assigned $x^{w*}$, assume $v_\theta^{\text{init}}(x, t) = \frac{x^l - x}{1-t}$ for $t \in [0,1)$. Let the selected policy snapshots satisfy the idealized velocity update $v_\theta^{\tau+1}(x, t) = v_\theta^\tau(x, t) + \alpha \big( \frac{x^{w*} - x}{1-t} - v_\theta^\tau(x, t) \big)$ pointwise over $(x,t)$, where $\tau \ge 0$ indexes successive policy snapshots rather than necessarily
every raw SGD step and $\alpha \in (0,1)$ is an effective contraction factor between consecutive snapshots. Then the terminal generation is $x_1(\tau) = \big(1 - (1-\alpha)^\tau\big) x^{w*} + (1-\alpha)^\tau x^l$, and hence $\lim_{\tau \to \infty} x_1(\tau) = x^{w*}$.}

\begin{proof}
Let the initial policy toward the negative sample be
\begin{equation}
    v_\theta^{\text{init}}(x, t) = \frac{x^l - x}{1-t}.
\end{equation}
For the fixed conditional target $x^{w*}$, the time-weighted negative objective $\mathcal{L}_{\text{neg}}$ and the corresponding flow-matching regression share the minimizer $v^{*}=\frac{x^{w*}-x}{1-t}$. We analyze selected policy snapshots under the idealized assumption that their velocity values contract pointwise toward this minimizer by an effective factor $\alpha$. $\alpha \in (0,1)$ is an effective contraction factor between consecutive snapshots.
\begin{equation}
    v_\theta^{\tau+1}(x, t) = v_\theta^\tau(x, t) + \alpha \big( \frac{x^{w*} - x}{1-t} - v_\theta^\tau(x, t) \big).
\end{equation}
This recurrence is a field-space model between the selected snapshots, not an assertion that every raw Stochastic Gradient Descent (SGD) step follows the same update in the parameter space of a finite neural network. Subtracting $\frac{x^{w*} - x}{1-t}$ from both sides isolates the tracking residual $E^\tau(x, t) = v_\theta^\tau(x, t) - \frac{x^{w*} - x}{1-t}$:
\begin{equation}
    E^{\tau+1}(x, t) = E^\tau(x, t) - \alpha\, E^\tau(x, t) = (1 - \alpha) E^\tau(x, t).
\end{equation}
Unrolling this geometric recurrence from $\tau=0$ gives 
\begin{equation}
    E^\tau = (1-\alpha)^\tau E^0,
\end{equation}
hence
\begin{equation}
    \label{eq:p3-vtau}
    v_\theta^\tau(x, t) = \frac{x^{w*} - x}{1-t} + (1 - \alpha)^\tau \left( v_\theta^{\text{init}}(x, t) - \frac{x^{w*} - x}{1-t} \right).
\end{equation}
We write $\beta_\tau = (1-\alpha)^\tau$. Since $\alpha \in (0,1)$, we have $\beta_\tau \in (0,1]$ with $\beta_0 = 1$ and $\beta_{\tau\to \infty} \to 0$. After the initial velocity $v_\theta^{\text{init}}(x, t) = \frac{x^l - x}{1-t}$ is substituted into~\eqref{eq:p3-vtau}, the two terms share the common gain $\frac{1}{1-t}$ and combine:
\begin{equation}
    \label{eq:p3-combined}
    v_\theta^\tau(x, t)
    = \frac{(1-\beta_\tau)(x^{w*} - x) + \beta_\tau (x^l - x)}{1-t}
    = \frac{m_\tau - x}{1-t},
\end{equation}
where $m_\tau := (1-\beta_\tau)\, x^{w*} + \beta_\tau\, x^l$. The iterate is thus again a well-defined homing field, now targeting the interpolated anchor $m_\tau$ rather than $x^{w*}$. For each fixed $\tau$, the generative IVP $\frac{dx_t(\tau)}{dt} = \frac{m_\tau - x_t(\tau)}{1-t}$ with $x_0(\tau) = x_0$ has the form solved in Proposition~1 with target $m_\tau$, yielding
\begin{equation}
    x_t(\tau) = t\, m_\tau + (1-t)\, x_0,
\end{equation}
whose terminal step is $x_1(\tau) = m_\tau$. At $\tau = 0$, $\beta_0 = 1$ gives $x_1(0) = x^l$, recovering the negative generation of the initial policy. Since $|1-\alpha| < 1$ and $\beta_{\tau\to \infty} \to 0$, we have
\begin{equation}
    \lim_{\tau \to \infty} x_1(\tau) = \lim_{\tau \to \infty} m_\tau = x^{w*}.
\end{equation}
This is an idealized field-space analysis: it describes how the terminal endpoint of a particular negative generation evolves when selected policy snapshots exhibit pointwise contraction toward a fixed conditional homing field. It is not intended to characterize the parameter-space dynamics of a finite neural network or to require the recurrence to hold after every SGD step. The index $\tau$ may instead enumerate selected snapshots at which this effective contraction model is used to describe the observed field evolution.
\end{proof}

\subsection{Reward Design}
\label{sec:reward_design}
We provide the reward model Gemini with the reference image and six canonical-view renders of one generated mesh. Following common VLM-as-a-judge practice, the prompt specifies the evaluator role, input evidence, assessment criteria, and output format. We use the following prompt template:

\begin{center}
\setlength{\fboxsep}{5pt}%
\fcolorbox{black}{gray!8}{%
\begin{minipage}{0.96\linewidth}
\small
\renewcommand{\baselinestretch}{1.08}\selectfont
\setlength{\parskip}{0.3em}
\textbf{Role.} You are a strict 3D asset reviewer. Given a reference image and six canonical-view renders of a generated mesh, evaluate the visible geometric evidence. Inspect all views before assigning a score.

\textbf{Reconstruction fidelity.} Compare the generated mesh against the reference. For silhouette proportions, check whether the mesh introduces unsupported parts that alter the outer contour, loses or breaks visible contour-defining parts, or poorly reconstructs salient regions such as the face, hands, or prominent accessories. For surface appearance, penalize image textures, color boundaries, or material patterns that are incorrectly converted into physical protrusions, grooves, or other geometry.

\textbf{Geometric plausibility.} Check whether structures that should be symmetric remain balanced. Penalize isolated structures with no support from the reference or the object's geometry. Across views, verify that detail density is coherent: unseen or occluded regions should contain neither implausibly elaborate additions nor conspicuously sparse details relative to corresponding visible regions.

\textbf{Orientation alignment.} Determine whether the mesh is aligned with the canonical world axes rather than inheriting the diagonal tilt, perspective skew, or camera pose of the reference image. A dominant object axis should appear horizontal or vertical in canonical views, and bilaterally symmetric objects should appear macroscopically balanced in the front view.

\textbf{Output.} Briefly justify each of the three dimensions above. First, assign each dimension an individual score from 1 to 5, where a higher score indicates better generation quality. Then, summarize the scores using a weighting ratio of Reconstruction Fidelity : Geometric Plausibility : Orientation Alignment \(= 3:2:1\), and return a single integer reward from 1 (severe defects) to 5 (high fidelity, plausible geometry, and canonical alignment). Output the final reward in the form \texttt{<answer>[1--5]</answer>}.
\end{minipage}}
\end{center}

As illustrated in Figure~\ref{fig:Reward_Design}, the reference image and six canonical views jointly provide evidence for evaluating fidelity, plausibility and orientation. Guided by the prompt template, the reward model first assesses reconstruction fidelity, geometric plausibility, and orientation alignment, and then aggregates the three assessments to produce an integer reward. The example demonstrates how the final reward is grounded in explicit observations of the generated geometry.

\begin{figure*}[!h]
    \centering
    \includegraphics[width=0.98\textwidth]{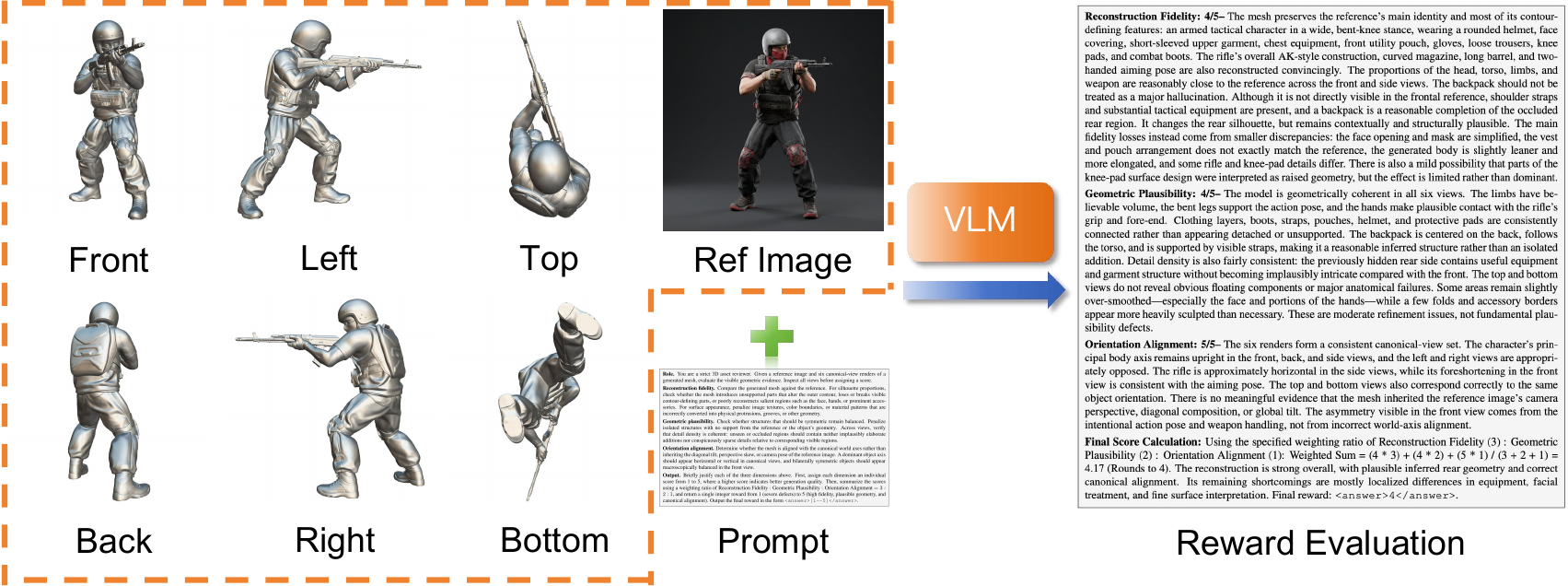}
    \caption{Illustration of the VLM-based reward evaluation process. The reward model receives a reference image, six canonical-view renderings of the generated mesh, and an evaluation prompt template, then produces criterion-wise assessments and a final scalar reward.}
    \label{fig:Reward_Design}
\end{figure*}


\begin{algorithm}[!h]
\caption{Dynamic Homing Optimization}
\label{alg:DHO}
\begin{algorithmic}[1]
\Require Policy $v_\theta$, reference policy $v_{\theta_{\text{ref}}}$, reward model $r(\cdot)$, CFG scale $s \ge 1$, weights $\eta \in (0,1)$ and $\beta > 0$, learning rate $\gamma$
\Ensure Optimized policy $v_\theta$
\Repeat
    \State $\hat{v}_\theta(x,t,c) \gets v_\theta(x,t,c_\varnothing) + s\big(v_\theta(x,t,c)-v_\theta(x,t,c_\varnothing)\big)$
    \State $\hat{v}_{\theta_{\text{ref}}}(x,t,c) \gets v_{\theta_{\text{ref}}}(x,t,c_\varnothing) + s\big(v_{\theta_{\text{ref}}}(x,t,c)-v_{\theta_{\text{ref}}}(x,t,c_\varnothing)\big)$
    \State Roll out $G$ samples $\{x_i\}_{i=1}^{G}$ for prompt $c$ using $\hat{v}_\theta$ and evaluate rewards $r_i \gets r(x_i,c)$
    \State Compute group-normalized advantages $A_i \gets (r_i-\bar{r})/s_r$
    \State Form balanced sets $\{(x_i^w,\omega_i^w)\}_{i=1}^{B}$ and $\{x_i^l\}_{i=1}^{B}$ for prompt $c$
    \State Sample times $t_i \sim \mathcal{U}(0,1)$
    \Statex \textit{Minimum-Cost Attractive Matching}
    \State Build $C_{i,j} \gets \|x_i^l - x_j^w\|_2^2$
    \State Solve $\boldsymbol{\pi}^{*} \gets \arg\min_{\boldsymbol{\pi}\in\Pi_B}\langle \mathbf{C}, \boldsymbol{\pi}\rangle$
    \State Assign $(x_i^{w*},\omega_i^{w*}) \gets (x_{\pi^*(i)}^{w},\omega_{\pi^*(i)}^{w})$
    \Statex \textit{Time-Aware Dynamic Correction}
    \For{$i \gets 1$ to $B$}
        \State $x_{t,i}^{w} \gets (1-t_i)x_{0,i}^{w} + t_i x_i^{w}$
        \State $x_{t,i}^{l} \gets (1-t_i)x_{0,i}^{l} + t_i x_i^l$
        \State $\tilde{v}_i \gets (x_i^{w*} - x_{0,i}^{l}) - t_i(x_i^l - x_{0,i}^{l})$
    \EndFor
    \Statex \textit{Policy Evaluation and Update}
    \Statex \textit{Positive samples use identity self-matching, $x_i^{w*}=x_i^w$.}
    \State $\mathcal{L}_{\text{pos}} \gets \frac{1}{B}\sum_i \omega_i^w\|\hat{v}_\theta(x_{t,i}^w, t_i, c) - (x_i^w - x_{0,i}^{w})\|_2^2$
    \State $\mathcal{L}_{\text{neg}} \gets \frac{1}{B}\sum_i \omega_i^{w*}\|(1-t_i)\hat{v}_\theta(x_{t,i}^l, t_i, c) - \tilde{v}_i\|_2^2$
    \State $\mathcal{L}_{\text{reg}} \gets \frac{1}{2B}\sum_i \sum_{x_t \in \{x_{t,i}^w, x_{t,i}^l\}} \|\hat{v}_\theta(x_t,t_i,c)-\hat{v}_{\theta_{\text{ref}}}(x_t,t_i,c)\|_2^2$
    \State $\theta \gets \theta - \gamma \nabla_\theta \big(\eta\mathcal{L}_{\text{neg}} + (1-\eta)\mathcal{L}_{\text{pos}} + \beta\mathcal{L}_{\text{reg}}\big)$
\Until{convergence}
\end{algorithmic}
\end{algorithm}

\subsection{DHO Pseudocode}
Algorithm~\ref{alg:DHO} summarizes the DHO optimization procedure. For each prompt, rollout samples are first ranked using group-normalized advantages. Positive- and negative-advantage samples form the positive and negative sets, respectively, and the larger set is subsampled to obtain balanced sets for one-to-one assignment. The retained positive samples are weighted by their normalized positive advantages, while each negative sample later inherits the weight of its assigned target. CFG is applied to both rollout generation and policy updates, with $s=1$ corresponding to the CFG-free setting. Note that Algorithm~\ref{alg:DHO} only presents the core DHO update, while Flow3D-Pro executes sample rollout and policy optimization asynchronously.

MAM computes distances in latent feature space between the two sets and solves a minimum-cost assignment using the Hungarian algorithm, yielding a distinct positive target for every negative sample. After sampling the flow time and constructing forward-process interpolants, TDC converts each assignment into a time-aware regression target that accounts for the remaining displacement to the positive endpoint. The positive branch retains the standard flow-matching target through self-assignment. Finally, DHO combines the advantage-weighted positive and negative losses with velocity-space regularization toward the frozen reference policy, and updates the policy using the resulting objective.


\subsection{Rollout Sampling}
Because DHO reconstructs policy-update interpolants from initial noises and rollout endpoints rather than stored sampling trajectories, its data collection is compatible with both ordinary differential equation (ODE) and stochastic differential equation (SDE) samplers. For each reference image, either sampler generates a group of $G$ meshes for reward evaluation.

\paragraph{ODE sampling.}
DHO adopts the noise-to-data convention
\begin{equation}
    x_t=(1-t)x_0+t x_1,
    \qquad
    x_0\sim\mathcal{N}(0,\mathbf I),
    \qquad
    d x_t=v_\theta(x_t,t,c)\,dt,
\end{equation}
where $t$ increases from $0$ (noise) to $1$ (data). For each reference image $c$, we independently draw $G$ initial noises $\{x_{0,i}\}_{i=1}^{G}$, where $x_{0,i}{\sim}\mathcal{N}(0,\mathbf I)$, and integrate the deterministic flow ODE from $t=0$ to $t=1$ using the same condition $c$. Since the ODE is deterministic given its initial state, diversity within the rollout group arises from the independent initial noises. The resulting endpoints $\{x_{1,i}\}_{i=1}^{G}$ are decoded into $G$ meshes.

\paragraph{SDE sampling.}
Inspired by Flow-GRPO~\cite{Flow-GRPO}, we convert the deterministic flow ODE into a stochastic process for rollout exploration. Flow-GRPO uses a data-to-noise time variable $\tau$ and performs generation using the reverse-time SDE
\begin{equation}
    d z_\tau
    =
    \left[
        \widetilde v_\theta(z_\tau,\tau,c)
        -
        \frac{\widetilde\sigma_\tau^2}{2}
        \nabla_{z_\tau}\log p_\tau(z_\tau\mid c)
    \right]d\tau
    +
    \widetilde\sigma_\tau\,dW_\tau,
    \qquad
    \widetilde\sigma_\tau
    =
    \kappa\sqrt{\frac{\tau}{1-\tau}},
\end{equation}
where $\kappa$ is a scalar hyper-parameter that controls the noise level. Under the change of variables
\begin{equation}
    t=1-\tau,
    \qquad
    x_t=z_{1-t},
    \qquad
    v_\theta(x_t,t,c)
    =
    -\widetilde v_\theta(z_\tau,\tau,c),
\end{equation}
the equivalent noise-to-data SDE used by DHO becomes
\begin{equation}
    d x_t
    =
    \left[
        v_\theta(x_t,t,c)
        +
        \frac{\sigma_t^2}{2}
        \nabla_{x_t}\log p_t(x_t\mid c)
    \right]dt
    +
    \sigma_t\,dW_t,
    \qquad
    \sigma_t=\kappa\sqrt{\frac{1-t}{t}}.
    \label{eq:dho-rollout-sde}
\end{equation}

For the rectified-flow path, the score can be recovered from the velocity as
\begin{equation}
    \nabla_{x_t}\log p_t(x_t\mid c)
    =
    \frac{t v_\theta(x_t,t,c)-x_t}{1-t}.
\end{equation}
Substituting it into Eq.~\eqref{eq:dho-rollout-sde} gives
\begin{equation}
    d x_t
    =
    \left[
        v_\theta(x_t,t,c)
        +
        \frac{\sigma_t^2}{2(1-t)}
        \left(
            t v_\theta(x_t,t,c)-x_t
        \right)
    \right]dt
    +
    \sigma_t\,dW_t.
\end{equation}

Using Euler--Maruyama discretization, we obtain the rollout update:
\begin{equation}
\begin{aligned}
x_{t+\Delta t}
={}&
x_t
+
\left[
v_\theta(x_t,t,c)
+
\frac{\sigma_t^2}{2(1-t)}
\left(
t\,v_\theta(x_t,t,c)-x_t
\right)
\right]\Delta t
+
\sigma_t\sqrt{\Delta t}\,\epsilon,
\\
={}&
x_t
+
\left[
v_\theta(x_t,t,c)
+
\frac{\kappa^2}{2t}
\left(
t\,v_\theta(x_t,t,c)-x_t
\right)
\right]\Delta t
+
\kappa\sqrt{\frac{1-t}{t}}\sqrt{\Delta t}\,\epsilon,
\end{aligned}
\label{eq:dho-rollout-update}
\end{equation}
where $\epsilon\sim\mathcal{N}(0,\mathbf I)$. We set $\kappa=0.2$ for SDE rollout experiments.


\section{Experiments}
\subsection{More Implementation Details}
\paragraph{Training and inference configurations for our method.} Stage I is based on the Hunyuan3D 2.1~\cite{hunyuan3d21} architecture. SFT uses a learning rate of $5\times10^{-6}$ and is trained for 5K steps on 32 H20 GPUs with a per-GPU batch size of 4. DHO is trained for 5K optimization steps with AdamW at a learning rate of $2.5\times10^{-6}$. Training is performed in bfloat16 with a per-GPU batch size of 2. We set the homing coefficient $\eta$ to $0.1$, the regularization coefficient $\beta$ to $1.0$, the rollout group size $G$ to $8$, and the CFG scale to $3.0$. For asynchronous training, we allocate $N=28$ GPUs to rollout and $M=4$ GPUs to policy updates, with $\lambda_{\min}=4$, $\lambda_{\max}=8$, and $K=16$. Although DHO supports both ODE- and SDE-based rollouts, in practice, we favor SDE sampling for greater rollout diversity. After DHO training, Stage I inference uses 50 ODE sampling steps. The output voxel representation of Stage I has a resolution of 518. Stage II is based on LATTICE~\cite{lai2025lattice} and uses a shape VAE to decode structure-conditioned latents into an implicit geometric field for mesh extraction. Meshes are extracted at a decoding resolution of 1536. It uses 5 ODE sampling steps and a CFG scale of $5.0$, and remains frozen for geometric refinement during inference.

\paragraph{RL objective comparison settings.} In the RL objective comparison experiments, for a fair comparison, all RL variants share the Stage I architecture, SFT initialization, reward model, and GenMesh-Test evaluation set, and their outputs are processed by the same frozen Stage II refiner. Our implementations of GRPO-style, DPO-style, and NFT-style methods follow Flow-GRPO~\cite{Flow-GRPO}, Flow-DPO~\cite{Flow-DPO, Qwen-Image-TR}, and DiffusionNFT~\cite{DiffusionNFT}, respectively. The positive and negative samples in DPO-style are paired using MAM, and $\beta_{\mathrm{DPO-style}}$ is set to 5000.0. For GRPO-style, we reparameterize Flow-GRPO's SDE from its original data-to-noise time convention into the equivalent noise-to-data formulation adopted in this work. We set the noise-level $\kappa_{\mathrm{GRPO-style}}$ to 0.2, matching the setting used in DHO. $\beta_{\mathrm{NFT-style}}$ is set to 1.0. All methods use asynchronous online training, with GPU configurations held constant across methods for both training and inference. Training is performed in bfloat16 with a per-GPU batch size of 2 and a rollout group size of $G=8$. Inference uses 50 ODE sampling steps. All RL objectives are trained for 5K optimization steps using AdamW with a learning rate of $2.5\times10^{-6}$. 


\subsection{User Study Evaluation Criteria}
For relative preference scoring, technical art experts are asked to sort all generated meshes from highest to lowest. The main considerations for the User Study are fidelity and plausibility. Fidelity mainly reflects how recognizably the generated mesh conveys the identity and overall proportions of the reference image, while plausibility mainly reflects whether the resulting shape reads as a coherent and well-formed 3D object in its own right. Experts make this judgment holistically from their professional experience rather than following a prescribed checklist. All evaluations are conducted in a blind manner: method identities are anonymized and the presentation order is randomized for every sample. Each reference image is independently evaluated by three experts, who rank the outputs of all methods from best to worst. The resulting ranking scores are averaged across the three experts and then across all reference images. Human experts rank the generated meshes of U methods from best (U - 1) to worst (0), revealing relative preference relationships. 


\subsection{Component Ablations}
This section provides additional quantitative and qualitative evaluations of the key components of DHO. For the component studies, we use the same Stage I architecture, training pipeline, reward model, and evaluation protocol, varying only the factor under study. 
Since each User Study produces a relative ranking of the variants within its own comparison group, these scores should be interpreted within each table or figure rather than across different ablations. We conduct a series of ablation studies on Stage I and evaluate the resulting models on GenMesh-Test.

\paragraph{Target assignment.}
We first isolate the target-selection mechanism in MAM. All training and evaluation settings are held fixed while MAM is replaced with random pairing, nearest-neighbor matching, or soft attraction. Adapted from the positive-sample attractive component of Drifting Models~\cite{Drifting_Model}, soft attraction uses a normalized similarity-weighted aggregation over all retained positive samples to produce a soft multi-target direction for each negative sample. This comparison examines whether minimum-cost one-to-one assignment provides a more effective homing target than unconstrained or soft diffuse alternatives.

\begin{figure*}[!h]
    \centering
    \includegraphics[width=0.95\textwidth]{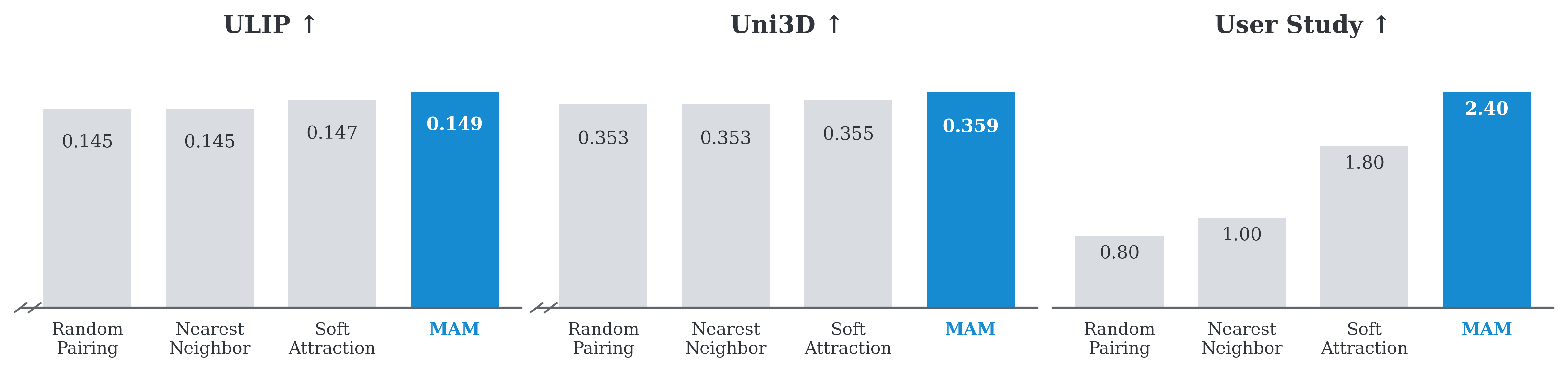}
    \caption{Effect of the target-assignment strategy in DHO.}
    \label{fig:MAM}
\end{figure*}

Figure~\ref{fig:MAM} shows that MAM obtains the highest scores on all three metrics, outperforming soft attraction as well as random and nearest-neighbor pairing. Soft attraction aggregates multiple positive samples and can retain a diffuse multi-target direction near equilibrium, whereas MAM assigns each negative sample a single positive target under a one-to-one constraint. These results support the use of deterministic, minimum-cost one-to-one assignment over unconstrained or soft multi-target guidance.

\paragraph{Sample selection.}
We evaluate whether prioritizing higher-advantage positive samples in MAM benefits DHO. When positives outnumber negatives, we compare retaining the $B$ highest-advantage positives with randomly selecting $B$ positives to form balanced sets for MAM.

\begin{figure*}[!h]
    \centering
    \includegraphics[width=0.80\textwidth]{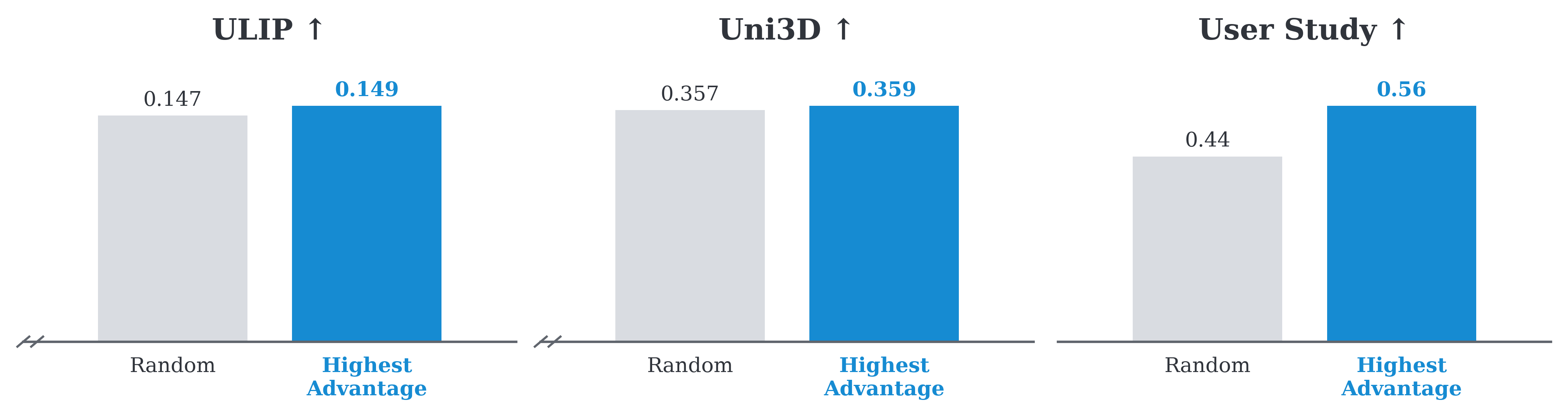}
    \caption{Effect of positive-sample selection in MAM.}
    \label{fig:Sample_Selection}
\end{figure*}

As shown in Figure~\ref{fig:Sample_Selection}, highest-advantage selection performs better than random selection, with User Study increasing from $0.44$ to $0.56$. The results support prioritizing higher-advantage positive samples as homing targets in MAM.

\paragraph{Time-aware correction.}
We next study the temporal weighting in TDC by comparing the unweighted objective, $w(t)=1$, with the proposed $w(t)=(1-t)^2$. The latter cancels the $1/(1-t)$ factor in the homing velocity when the loss is rewritten, yielding a numerically stable regression target over the full time horizon. All other components, including MAM and advantage weighting, are unchanged.

\begin{table}[!h]
  \centering
  \caption{Effect of time-aware weighting in TDC.}
  \label{tab:trajectory_stability}
  \resizebox{0.6\linewidth}{!}{
  \begin{tabular}{lccc}
    \toprule
    Time Weighting & ULIP $\uparrow$ & Uni3D $\uparrow$ & User Study $\uparrow$ \\
    \midrule
    $w(t) = 1$ & 0.146 & 0.354 & 0.45 \\
    $w(t) = (1-t)^2$ & \textbf{0.149} \textbf{\textcolor{blue}{(+2.05\%)}} & \textbf{0.359} \textbf{\textcolor{blue}{(+1.41\%)}} & \textbf{0.55} \\
    \bottomrule
  \end{tabular}
  }
\end{table}

As described in Table~\ref{tab:trajectory_stability}, the time-aware objective obtains higher scores than the unweighted variant on all reported metrics: ULIP increases from $0.146$ to $0.149$, Uni3D from $0.354$ to $0.359$, and User Study from $0.45$ to $0.55$. Within this controlled comparison, the result supports the empirical benefit of using the numerically stable TDC regression.

\paragraph{Dynamic correction.}
The benefit of dynamic correction is assessed by replacing the negative-branch homing velocity $(x^{w*}-x_t^l)/(1-t)$ with the static action $x^{w*}-x_0$, while keeping MAM and other training settings unchanged.

\begin{table}[!h]
  \centering
  \caption{Static versus dynamic velocity targets for the negative branch.}
  \label{tab:dynamic_correction}
  \resizebox{0.6\linewidth}{!}{
  \begin{tabular}{lccc}
    \toprule
    Velocity Target & ULIP $\uparrow$ & Uni3D $\uparrow$ & User Study $\uparrow$ \\
    \midrule
    Static & 0.145 & 0.353 & 0.31 \\
    Dynamic & \textbf{0.149} \textbf{\textcolor{blue}{(+2.76\%)}} & \textbf{0.359} \textbf{\textcolor{blue}{(+1.70\%)}} & \textbf{0.69} \\
    \bottomrule
  \end{tabular}
  }
\end{table}

Dynamic correction achieves higher scores on all three metrics in Table~\ref{tab:dynamic_correction}, raising User Study from $0.31$ to $0.69$. Accounting for the current state and remaining time therefore yields more effective guidance than a static action.

\paragraph{Advantage weighting.}
We evaluate whether the quality differences among positive samples should affect their contributions to the DHO objective. The weighted variant scales each positive-sample loss by its normalized positive advantage and transfers the same weight to negative samples assigned to that positive target. The unweighted variant treats all retained pairs equally. 

\begin{table}[!h]
  \centering
  \caption{Ablation of positive-sample advantage weighting in DHO.}
  \label{tab:advantage_weighting}
  \resizebox{0.65\linewidth}{!}{
  \begin{tabular}{cccc}
    \toprule
    Advantage Weighting & ULIP $\uparrow$ & Uni3D $\uparrow$ & User Study $\uparrow$ \\
    \midrule
    Without & 0.145 & 0.354 & 0.42 \\
    With & \textbf{0.149} \textbf{\textcolor{blue}{(+2.76\%)}} & \textbf{0.359} \textbf{\textcolor{blue}{(+1.41\%)}} & \textbf{0.58} \\
    \bottomrule
  \end{tabular}
  }
\end{table}

As shown in Table~\ref{tab:advantage_weighting}, advantage weighting performs better than the unweighted variant. This result suggests that emphasizing higher-advantage positive targets is beneficial in DHO.

\paragraph{Classifier-free guidance.}
We investigate how the CFG scale affects DHO. The same CFG scale is used consistently for rollout generation, policy updates, and inference. When the scale is set to $1.0$, the guided velocity reduces to the conditional velocity, so the unconditional prediction has no effect. When the scale is greater than $1.0$, the conditional prediction is extrapolated away from the unconditional prediction to increase conditioning strength.

\begin{figure*}[!h]
    \centering
    \includegraphics[width=0.95\textwidth]{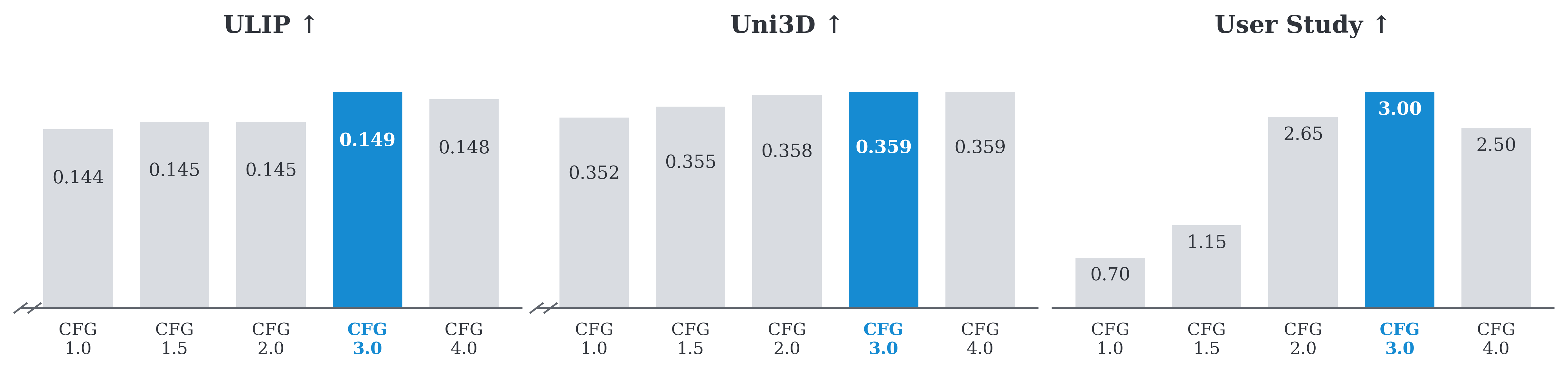}
    \caption{Ablation study on the CFG scale.}
    \label{fig:CFG}
\end{figure*}

As shown in Figure~\ref{fig:CFG}, using CFG with $s=3.0$ performs better than the CFG-free setting ($s=1.0$) and the other evaluated guidance scales. This result suggests that $s=3.0$ provides a favorable balance between rollout diversity and generation quality during policy optimization.

\paragraph{Negative homing and reference regularization.}
Moreover, we evaluate the balance among positive anchoring, negative homing, and reference regularization. We vary the homing coefficient $\eta$ and the regularization coefficient $\beta$ while retaining the remaining training configuration.

\begin{table}[!h]
  \centering
  \caption{DHO performance with the homing coefficient $\eta$ and the regularization coefficient $\beta$. The upper block fixes $\beta$ and varies $\eta$ (where $\eta=0.0$ disables the dynamic homing branch for negative-samples), while the lower block fixes $\eta$ and varies $\beta$ (where $\beta=0.0$ removes the reference-policy regularization). Each block is an independent User Study comparison group.}
  \label{tab:kl}
  \resizebox{0.75\linewidth}{!}{
  \begin{tabular}{cccc|ccc|ccc}
    \toprule
    \multicolumn{4}{c|}{$\eta$} & \multicolumn{3}{c|}{$\beta$} & \multirow{2}{*}{ULIP $\uparrow$} & \multirow{2}{*}{Uni3D $\uparrow$} & \multirow{2}{*}{User Study $\uparrow$} \\
    \cmidrule(lr){1-4} \cmidrule(lr){5-7}
    0.0 & 0.1 & 0.2 & 0.5 & 0.0 & 1.0 & 5.0 & & & \\
    \midrule
    \ding{51} &           &           &           &           & \ding{51} &           & 0.145 & 0.354 & 1.13 \\
              & \ding{51} &           &           &           & \ding{51} &           & \textbf{0.149} \textbf{\textcolor{blue}{(+2.76\%)}} & \textbf{0.359} \textbf{\textcolor{blue}{(+1.41\%)}} & \textbf{1.92} \\
              &           & \ding{51} &           &           & \ding{51} &           & 0.147 \textcolor{blue}{(+1.38\%)} & 0.357 \textcolor{blue}{(+0.85\%)} & 1.68 \\
              &           &           & \ding{51} &           & \ding{51} &           & 0.145 \textcolor{blue}{(+0.00\%)} & 0.355 \textcolor{blue}{(+0.28\%)} & 1.27 \\
    \midrule
              & \ding{51} &           &           & \ding{51} &           &           & 0.140 & 0.342 & 0.30 \\
              & \ding{51} &           &           &           & \ding{51} &           & \textbf{0.149} \textbf{\textcolor{blue}{(+6.43\%)}} & \textbf{0.359} \textbf{\textcolor{blue}{(+4.97\%)}} & \textbf{1.70} \\
              & \ding{51} &           &           &           &           & \ding{51} & 0.146 \textcolor{blue}{(+4.29\%)} & 0.358 \textcolor{blue}{(+4.68\%)} & 1.00 \\
    \bottomrule
  \end{tabular}
  }
\end{table}

As shown in Table~\ref{tab:kl}, among the tested settings, $\eta=0.1$ and $\beta=1.0$ obtain the highest scores and are therefore used as the default configuration. Since $\eta$ weights $\mathcal{L}_{\text{neg}}$, setting $\eta=0.0$ removes the dynamic homing of negative samples, leaving only the positive-sample branch. This variant drops from $0.149$/$0.359$/$1.92$ to $0.145$/$0.354$/$1.13$, indicating that the gains do not come from the positive-sample branch alone. However, increasing $\eta$ to $0.5$ reduces performance, suggesting that negative homing should be appropriately balanced with positive-sample anchoring. Removing the reference-policy regularization ($\beta=0.0$) yields worse results, where the policy deviates from the pretrained velocity field and tends to produce degenerate geometry. These results suggest that dynamic homing of negative samples supplies the corrective signal driving the improvement, while the reference-policy regularization keeps it within a stable neighborhood of the pretrained policy.

\paragraph{Rollout group size.}
The rollout group size $G$ determines how many candidate meshes are generated for each reference image before computing group-normalized advantages. We study its effect by varying group sizes while keeping the Stage I architecture and DHO objective unchanged.

\begin{figure*}[!h]
    \centering
    \includegraphics[width=0.95\textwidth]{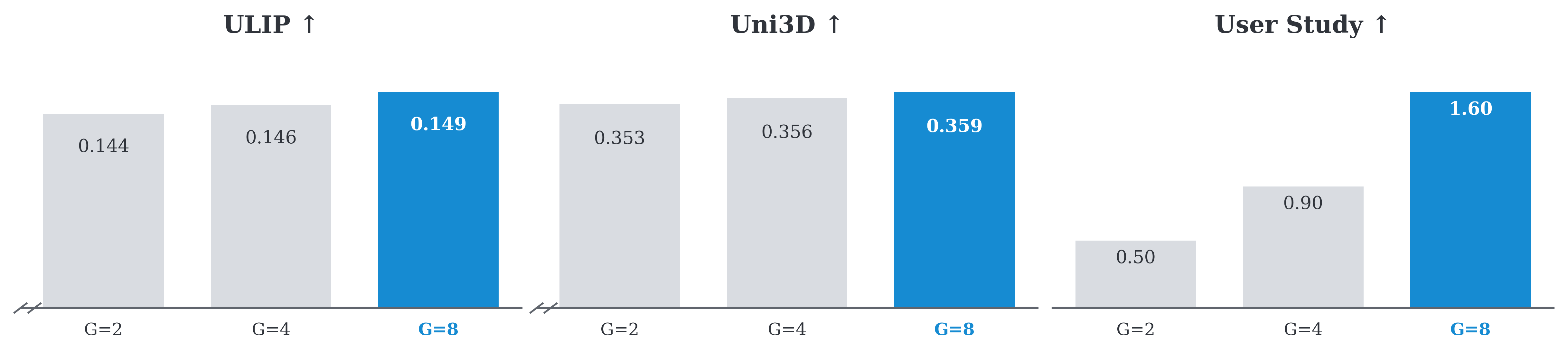}
    \caption{Effect of the rollout group size $G$ on DHO performance.}
    \label{fig:Group}
\end{figure*}

As shown in Figure~\ref{fig:Group}, performance improves consistently as $G$ increases from $2$ to $8$, with the User Study score rising from $0.50$ to $1.60$. This suggests that a larger candidate set enables broader exploration, provides more reliable relative preference signals, and yields a richer set of positive anchors for MAM. These results suggest the potential of larger rollout groups to improve DHO performance.

\paragraph{Rollout sampler.}
We compare different rollout samplers, including ODE and SDE, to evaluate their influence on DHO performance. Both variants generate $G=8$ candidate meshes for each reference image, while the SDE sampler additionally injects stochasticity along each sampling trajectory with $\kappa=0.2$. 

\begin{table}[!h]
  \centering
  \caption{Comparison of ODE and SDE rollout samplers.}
  \label{tab:rollout_sampler}
  \resizebox{0.6\linewidth}{!}{
  \begin{tabular}{cccc}
    \toprule
    Rollout Sampler & ULIP $\uparrow$ & Uni3D $\uparrow$ & User Study $\uparrow$ \\
    \midrule
    ODE & 0.145 & 0.353 & 0.33 \\
    SDE & \textbf{0.149} \textbf{\textcolor{blue}{(+2.76\%)}} & \textbf{0.359} \textbf{\textcolor{blue}{(+1.70\%)}} & \textbf{0.67} \\
    \bottomrule
  \end{tabular}
  }
\end{table}

As shown in Table~\ref{tab:rollout_sampler}, SDE rollout performs better than ODE sampling, especially increasing User Study from $0.33$ to $0.67$. Compared with deterministic ODE sampling, the additional trajectory-level stochasticity of SDE broadens exploration. 

\begin{figure*}[!h]
    \centering
    \includegraphics[width=1.0\textwidth]{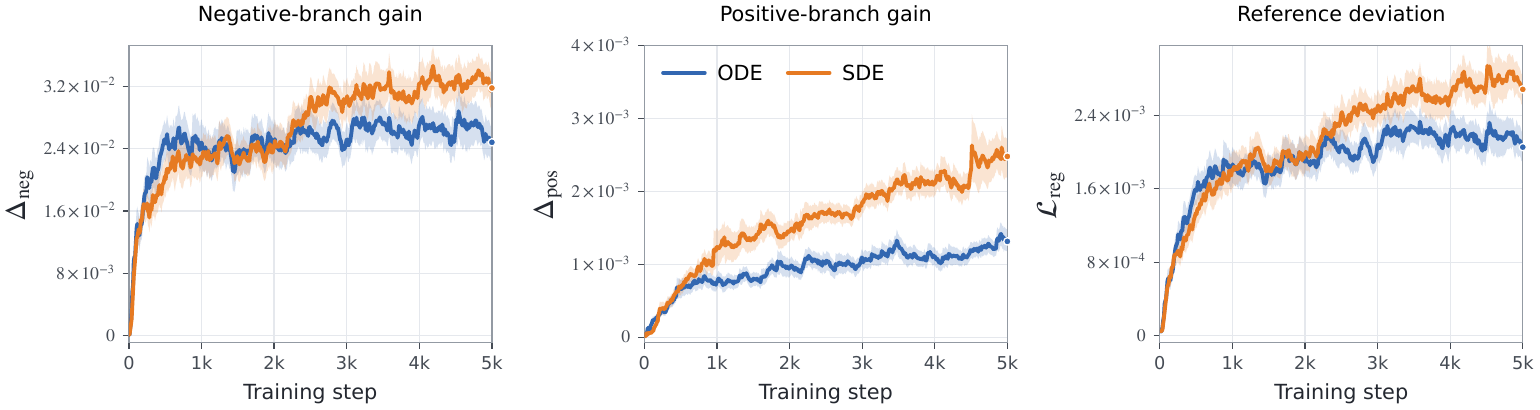}
    \caption{DHO training dynamics with ODE and SDE rollouts. From left to right: negative-branch gain, positive-branch gain, and reference-policy deviation.}
    \label{fig:Training_Curve}
\end{figure*}

In Figure~\ref{fig:Training_Curve}, for negative samples, $\Delta_{\mathrm{neg}}$ measures how much the current policy reduces the mean squared error (MSE) to the homing target compared with the frozen reference. For positive samples, $\Delta_{\mathrm{pos}}$ measures the corresponding reduction for the positive-sample target. Larger values indicate greater improvements in target fitting. $\mathcal{L}_{\mathrm{reg}}$ is the mean squared policy--reference velocity difference across both branches. SDE yields larger gains in both branches.


\paragraph{Reward gain.}
The contribution of each reward dimension to the geometric quality is further evaluated. Starting from the same SFT initialization, we progressively enable reconstruction fidelity, geometric plausibility, and orientation alignment in the reward prompt, and post-train Stage I with DHO. 

\begin{table}[!h]
  \centering
  \caption{Impact of the reward dimensions used for DHO post-training, where ``--'' denotes the SFT model.}
  \label{tab:reward_gain}
  \resizebox{0.55\linewidth}{!}{
  \begin{tabular}{ccc|c}
    \toprule
    \makecell{Reconstruction\\Fidelity} & \makecell{Geometric\\Plausibility} & \makecell{Orientation\\Alignment} & User Study $\uparrow$ \\
    \midrule
    --        & --        & --        & 0.63 \\
    \ding{51} &           &           & 1.46 \\
    \ding{51} & \ding{51} &           & 1.85 \\
    \ding{51} & \ding{51} & \ding{51} & \textbf{2.06} \\
    \bottomrule
  \end{tabular}
  }
\end{table}

As presented in Table~\ref{tab:reward_gain}, reconstruction fidelity yields the largest gain, with further improvements from progressively incorporating geometric plausibility and orientation alignment. 


\paragraph{Runtime analysis.} At inference time, Stage I takes an average of 62~s per sample on an NVIDIA H20 GPU. This runtime includes image loading and preprocessing, conditional feature extraction, 50-step ODE sampling, VAE-based mesh decoding, and mesh conversion.

Stage II geometry refinement takes an average of 81~s per sample on the same GPU using 5-step ODE sampling. This runtime includes image loading and preprocessing, DINO-based conditional feature extraction, input mesh loading and voxelization, flow-matching sampling, VAE-based implicit-field decoding, and mesh conversion.

\subsection{More Quantitative Comparisons with Current Methods}
\paragraph{Comparison with 3D generators on LATTICE-Bench.} We conduct additional experiments with current 3D geometry generation methods on LATTICE-Bench~\cite{lai2025lattice}, using ULIP and Uni3D similarity metrics.

\begin{table}[!h]
  \centering
  \caption{Quantitative evaluation on the LATTICE-Bench.}
  \label{tab:lattice}
  \resizebox{0.5\linewidth}{!}{
  \begin{tabular}{lcc}
    \toprule
    Method & ULIP $\uparrow$ & Uni3D $\uparrow$ \\
    \midrule
    Michelangelo~\cite{zhao2023michelangelo} & 0.115 & 0.261 \\
    Craftsman 1.5~\cite{li2024craftsman3d} & 0.129 & 0.298 \\
    Trellis~\cite{xiang2024trellis} & 0.126 & 0.311 \\
    Hunyuan3D 2.0~\cite{hunyuan3d20} & 0.130 & 0.315 \\
    Hi3DGen~\cite{Hi3dgen} & 0.112 & 0.299 \\
    Direct3D-s2~\cite{Direct3D-S2} & 0.122 & 0.314 \\
    LATTICE~\cite{lai2025lattice} & 0.130 & 0.315 \\
    Flow3D-OPD~\cite{Flow3D-OPD} & 0.138 & 0.321 \\
    Flow3D-Pro & \textbf{0.140} & \textbf{0.326} \\
    \bottomrule
  \end{tabular}
  }
\end{table}

Consistent with the results on GenMesh-Test, Table~\ref{tab:lattice} indicates that, after post-training with DHO, Flow3D-Pro obtains the highest scores among the listed representative image-to-3D mesh generation methods, reaching 0.140 in ULIP and 0.326 in Uni3D. These results demonstrate the role of DHO in improving geometric quality.

\paragraph{Comparison with additional post-training methods.}
We conduct experiments at Stage I, comparing SFT, rejection fine-tuning (RFT), PFM~\cite{kim2024preferencealignmentflowmatching}, and DHO on GenMesh-Test to evaluate DHO against positive-only fine-tuning and auxiliary preference-flow alignment. RFT fine-tunes the SFT model only on positive-advantage rollouts, whereas PFM trains an auxiliary latent-space flow from negative to positive samples while keeping the SFT generator frozen. The preference-based variants share the training images and reward model. Base-generator inference settings and mesh decoding are held fixed, with PFM additionally integrating its preference flow.

\begin{table}[!h]
  \centering
  \caption{Comparison with additional post-training methods.}
  \label{tab:additional_post_training_methods}
  \resizebox{0.65\linewidth}{!}{
  \begin{tabular}{lccc}
    \toprule
    Method & ULIP $\uparrow$ & Uni3D $\uparrow$ & User Study $\uparrow$ \\
    \midrule
    SFT & 0.144 & 0.351 & 0.97 \\
    RFT & 0.145 \textcolor{blue}{(+0.69\%)} & 0.355 \textcolor{blue}{(+1.14\%)} & 1.39 \\
    PFM~\cite{kim2024preferencealignmentflowmatching} & 0.147 \textcolor{blue}{(+2.08\%)} & 0.356 \textcolor{blue}{(+1.42\%)} & 1.70 \\
    DHO & \textbf{0.149} \textbf{\textcolor{blue}{(+3.47\%)}} & \textbf{0.359} \textbf{\textcolor{blue}{(+2.28\%)}} & \textbf{1.94} \\
    \bottomrule
  \end{tabular}
  }
\end{table}

As shown in Table~\ref{tab:additional_post_training_methods}, DHO outperforms both RFT and PFM. These results suggest that directly correcting the original generative field toward matched positive targets improves geometric quality more effectively than positive-only fine-tuning or auxiliary preference-flow alignment.

\begin{figure*}[!h]
    \centering
    \includegraphics[width=0.75\textwidth]{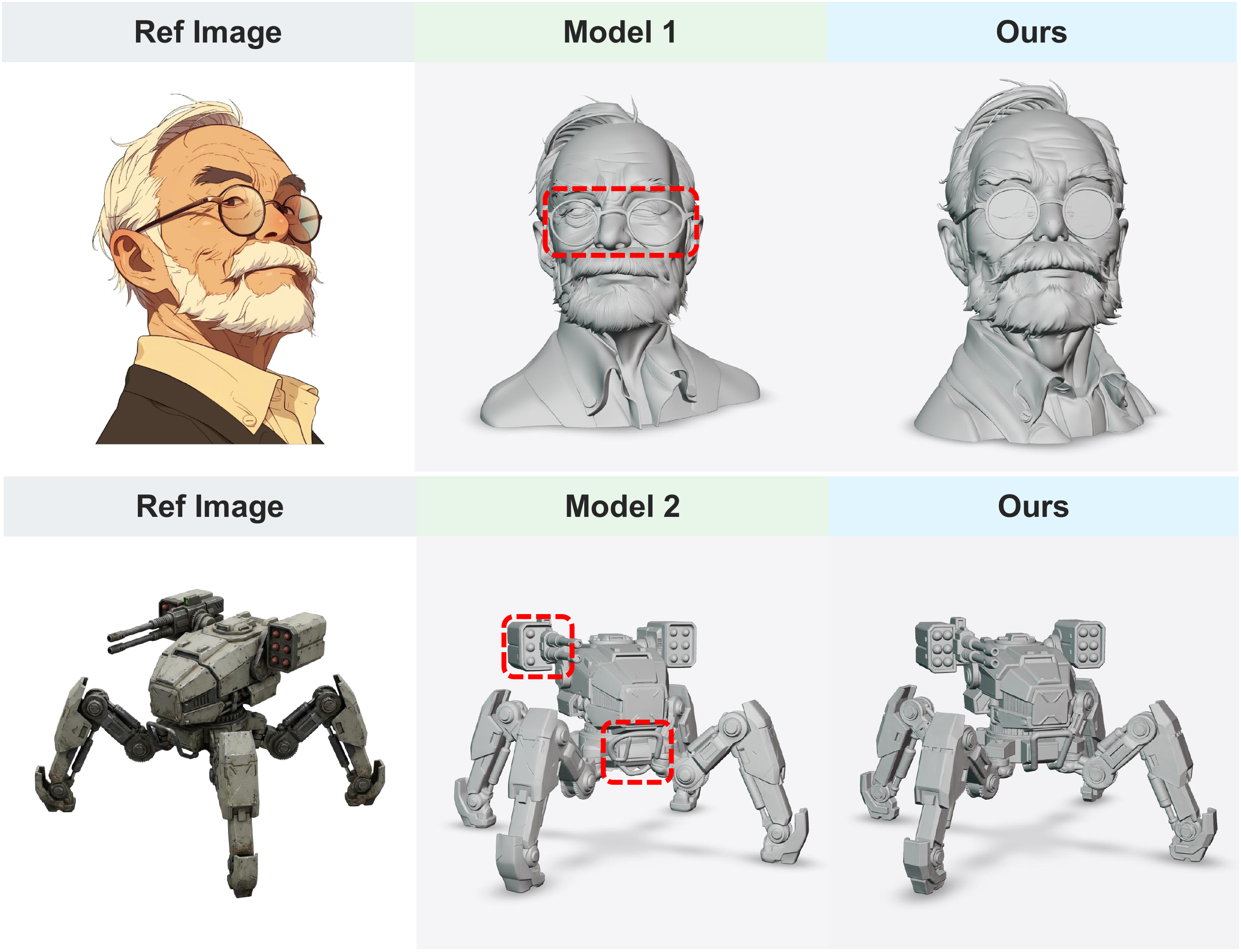}
    \caption{Visual comparison against commercial models. To facilitate visualization, all generated meshes are rotated to a common orientation.}
    \label{fig:Commercial_Comparison}
\end{figure*}

\subsection{More Qualitative Results}
\paragraph{Commercial model comparison.} Figure~\ref{fig:Commercial_Comparison} compares Flow3D-Pro with two closed-source commercial methods, denoted as Model 1 and Model 2. In the shown examples, Model 1 exhibits a skewed nose and eyeglasses with missing lenses, while Model 2 shows inconsistent left--right launcher layouts and distorted central geometry. Flow3D-Pro better preserves facial symmetry and bilateral component consistency--benefiting from the orientation-alignment and geometric-plausibility rewards, which jointly facilitate learning symmetric structures--while achieving more plausible global shapes and higher reconstruction fidelity.

\paragraph{More generation results.} We provide additional visual results to evaluate Flow3D-Pro on diverse reference images. Figure~\ref{fig:More_Visualization} covers a broad range of categories, including human and stylized characters, animals, plants, vehicles, architecture, everyday objects, and hard-surface assets. Each panel displays the reference image in the upper-right corner and the generated mesh in the main view.

\begin{figure*}[!h]
    \centering
    \includegraphics[width=1.0\textwidth]{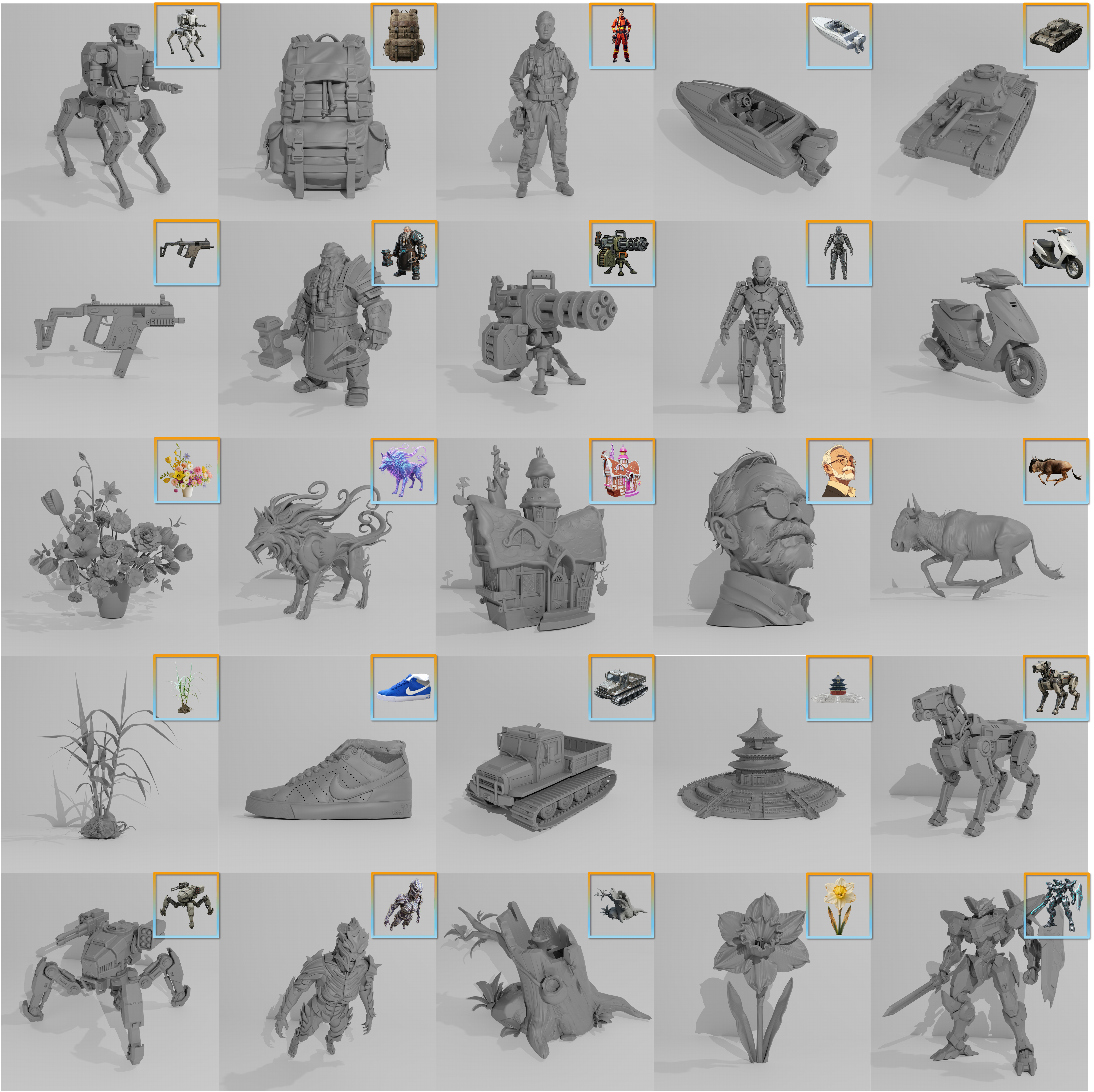}
    \caption{More qualitative results of Flow3D-Pro across diverse object categories and geometric structures.}
    \label{fig:More_Visualization}
\end{figure*}

Across the displayed examples, Flow3D-Pro reconstructs recognizable global silhouettes and category-specific structures while retaining local details such as character garments and facial features, articulated mechanical components, vehicle structures, and foliage. Together, these examples complement the quantitative evaluations and demonstrate Flow3D-Pro's versatility across diverse object categories and geometric structures.

\paragraph{Video demo.} The supplementary materials include a video showcasing additional visual results of our method.

\section{Limitations and Future Work}

Designing a reward that comprehensively characterizes the geometric quality of a reference-image-guided 3D mesh remains difficult, and our current dimensions are not fine-grained enough for higher-precision geometric properties to be accurately reflected. We plan to further decompose the reward dimensions and collect feedback from more professional artists to refine the reward design and improve generation quality.

In addition, DHO is applied only to the lifting stage while the refinement stage remains frozen, since the latter is tightly conditioned on the Stage I output and its rollouts therefore exhibit limited diversity for policy optimization. We plan to explore ways of enriching the rollout diversity of the Stage II.

In current 3D generation, constrained by pretrained model capabilities, rollout diversity, and reward-distribution complexity, DHO with positive-sample attraction-guided dynamic homing achieves better mesh post-training quality than representative DPO-, GRPO-, and NFT-style objectives, which steer predicted velocities away from negative-trajectory directions without explicitly specifying a target velocity field toward preferred samples. Since 2D visual generation differs from the 3D setting in pretrained priors and reward distributions, we plan to carry out further exploratory experiments to examine the performance of DHO in 2D visual generation.

\end{document}